\documentclass[runningheads,envcountsame,orivec]{llncs}

\usepackage[T1]{fontenc}
\usepackage{etoolbox}
\usepackage{times}
\usepackage{soul}
\usepackage{url}
\usepackage[hidelinks]{hyperref}
\usepackage[utf8]{inputenc}
\usepackage[small]{caption}
\usepackage{graphicx}
\usepackage{amssymb}
\usepackage{amsmath}

\usepackage{amsthm}
\usepackage{booktabs}
\usepackage{algorithm}
\usepackage{algorithmic}
\usepackage{xparse}
\urlstyle{same}

\DeclareDocumentCommand{\todo}{o g}{\IfNoValueTF{#1}{\begingroup\color{magenta}TODO: #2\endgroup}{\begingroup\color{magenta}#1 #2\endgroup}}
\usepackage{graphicx}
\usepackage{xspace}
\usepackage{xparse}
\usepackage{enumerate}
\usepackage{tabularx}
\usepackage{booktabs}
\usepackage{csquotes}
\usepackage{latexsym}
\usepackage{stmaryrd}
\usepackage{wasysym}
\usepackage{multicol}
\usepackage{multirow}
\usepackage{subcaption}
\usepackage{arydshln}
\usepackage{rotating}

\usepackage{tikz}
\usetikzlibrary{fadings}
\usetikzlibrary{shapes,decorations,positioning}
\usetikzlibrary{fadings,shapes,decorations,positioning,calc}
\usetikzlibrary{decorations.markings,decorations.pathreplacing}
\usetikzlibrary{shapes.geometric,automata,positioning}
\usetikzlibrary{shadows}\usetikzlibrary{calc}
\usetikzlibrary{decorations,decorations.pathmorphing,decorations.pathreplacing}
\usetikzlibrary{calc}

\DeclareDocumentCommand{\todo}{o g}{\IfNoValueTF{#1}{\begingroup\color{magenta}TODO: #2\endgroup}{\begingroup\color{magenta}#1 #2\endgroup}}

\DeclareDocumentCommand{\remark}{o g}{\IfNoValueTF{#1}{\begingroup\color{magenta}Anmerkung: #2\endgroup}{\begingroup\color{magenta}#1 #2\endgroup}}

\theoremstyle{definition}

\newtheorem*{observation*}{Observation}
\makeatletter
\newcommand{\ifLatexThree}[2]{\@ifpackageloaded{xparse}{#1}{#2}}
\newcommand{\ifAMSmath}[2]{\@ifpackageloaded{amsmath}{#1}{#2}}
\newcommand{\ifMathSCR}[2]{\@ifpackageloaded{mathrsfs}{#1}{#2}}
\newcommand{\ifMathHyperREF}[2]{\@ifpackageloaded{hyperref}{#1}{#2}}
\makeatother

\ifLatexThree{%
	\NewDocumentCommand{\headword}{s o m}{\IfBooleanTF{#1}{#3}{\textbf{#3}}\IfNoValueTF{#2}{\index{#3}}{\index{#2}}}%
}{%
	\makeatletter
	\def\@headword#1{\textbf{#1}\index{#1}}%
	\def\@@headword#1{#1\index{#1}}%
	\def\headword#1{\@ifstar\@headword{#1}\@@headword{#1}}%
	\makeatother
}

\makeatletter
\@ifundefined{naturals}{}{}
\@ifundefined{ol}{}{}
\makeatother

\makeatletter
\newcommand{\textlabelmarker}[1]{%
	\protected@edef\@currentlabel{#1}%
	\phantomsection%
}
\newcommand{\textlabel}[2]{%
	\textlabelmarker{#1}%
	#1\label{#2}%
}
\makeatother

\ifx\nmableit\undefined
\makeatletter
\ifx\centernot\undefined
\newcommand*{\centernot}{%
	\mathpalette\@centernot
}
\fi
\def\@centernot#1#2{%
	\mathrel{%
		\rlap{%
			\settowidth\dimen@{$\m@th#1{#2}$}%
			\kern.5\dimen@
			\settowidth\dimen@{$\m@th#1=$}%
			\kern-.5\dimen@
			$\m@th#1\not$%
		}%
		{#2}%
	}%
}
\makeatother
\DeclareRobustCommand\nmableitSymb{\mathrel|\mkern-.5mu\joinrel\sim} %
\newcommand{\nmableit}{\ensuremath{\mbox{$\,\nmableitSymb\,$}}} %
\fi

\makeatletter
\ifMathHyperREF{%
	\newcommand{\seqref}[1]{\hyperref[{#1}]{\textup{\tagform@split{\getrefnumber{#1}}}}}%
}{%
	\newcommand{\seqref}[1]{\textup{\tagform@split{\getrefnumber{#1}}}}%
}
\newcommand\tagform@split[1]{%
	\begingroup
	\m@th\normalfont(\ignorespaces #1\unskip\@@italiccorr)%
	\endgroup
}
\makeatother

	\makeatletter
	\newcommand{\leqnomode}{\tagsleft@true\let\veqno\@@leqno}
	\newcommand{\reqnomode}{\tagsleft@false\let\veqno\@@eqno}
	\newcommand{\pushright}[1]{\ifmeasuring@#1\else\omit\hfill$\displaystyle#1$\fi\ignorespaces}
	\newcommand{\pushleft}[1]{\ifmeasuring@#1\else\omit$\displaystyle#1$\hfill\fi\ignorespaces}
	\newcommand{\specialcell}[1]{\ifmeasuring@#1\else\omit$\displaystyle#1$\ignorespaces\fi}

	\makeatother
\newcommand{\ksIF}{\text{if }}
\newcommand{\ksTHEN}{\text{, then }}

\newcommand{\ksAND}{\text{ and }}
\newcommand{\ksOR}{\text{ or }}

\newcommand{\ksOtherwise}{\text{otherwise}}

\newcommand{\tuple}[1]{\ensuremath{\langle{#1}\rangle}}

\newcommand{\modelsOf}[1]{\ensuremath{\llbracket #1\rrbracket}}
\newcommand{\negOf}[1]{{\ensuremath{\neg{#1}}}}

\newcommand{\minOf}[2]{\ensuremath{\min(#1,#2)}}

\newcommand{\nAtom}[1]{{\ensuremath{\overline{#1}}}}

\renewcommand{\modelsOf}[1]{\ensuremath{\ksMod(#1)}}

\DeclareMathOperator{\Cn}{Cn}
\DeclareMathOperator{\Th}{Th}

\DeclareMathOperator{\ksTh}{Th}
\newcommand{\theoryOf}[1]{\ensuremath{\ksTh(#1)}}
\DeclareMathOperator{\ksBel}{Bel}
\newcommand{\beliefsOf}[1]{\ensuremath{\ksBel(#1)}}

\newcommand{\setAllES}{\ensuremath{\mathcal{E}}}

\newcommand{\propLang}{\ensuremath{\mathcal{L}}}

\ifMathSCR{%
}{%
}

\makeatletter
\newcommand\footnoteref[1]{\protected@xdef\@thefnmark{\ref{#1}}\@footnotemark}
\makeatother

\usepackage{environ,xparse,xkeyval,ifthen}
\newif\ifpostulatepresent\postulatepresentfalse

\ExplSyntaxOn
\NewEnviron{postulate}[2]%
{%
	\tl_gclear:N \g_tmpa_tl%
	\tl_gput_right:Nn \g_tmpa_tl { \ifpostulatepresent\\[\smallskipamount]\fi\global\postulatepresenttrue }%
	\tl_gput_right:Nn \g_tmpa_tl { \noindent\ignorespaces\ifthenelse{\equal{#1}{}}{}{(\textlabel{#1}{pstl:#1})} & }%
	\tl_gput_right:No \g_tmpa_tl { \BODY }%
	\tl_gput_right:Nn \g_tmpa_tl { & \hspace{1.5\medskipamount} #2  }%
	\group_insert_after:N \g_tmpa_tl%
}
\ExplSyntaxOff

\DeclareDocumentCommand{\revision}{}{*}
\DeclareDocumentCommand{\clRevision}{}{\ensuremath{\circledast}}
\DeclareMathOperator{\supmin}{\supmin}

\DeclareMathOperator{\beliefSymb}{Bel}
\newcommand{\epistemicSpace}{\ensuremath{\mathbb{E}}}
\newcommand{\logicLanguage}{\ensuremath{\mathcal{L}}}

\renewcommand{\modelsOf}[1]{\ensuremath{\llbracket#1\rrbracket}}

\newcommand{\logicBeliefSet}{\logicLanguage^{\beliefSymb}}

	\makeatletter
\renewcommand{\textlabel}[2]{%
	\protected@edef\@currentlabel{#1}%
	\phantomsection%
	#1\label{#2}%
}
\makeatother

\begin{document}
\title{Credibility-Limited Revision for Epistemic Spaces\\
(including supplementary material)}
\titlerunning{Credibility-Limited Revision for Epistemic Spaces}
\author{
    Kai Sauerwald\orcidID{0000-0002-1551-7016}}
\authorrunning{Kai Sauerwald}
\institute{FernUniversität in Hagen,
    Artificial Intelligence Group, 58084~Hagen,~Germany\\
    \email{kai.sauerwald@fernuni-hagen.de}}
\maketitle              %

\begin{abstract}
    We consider credibility-limited revision in the framework of belief change for epistemic spaces, permitting inconsistent belief sets and inconsistent beliefs.
    In this unrestricted setting, the class of credibility-limited revision operators does not include any AGM revision operators.
    We extend the class of credibility-limited revision operators in a way that all {AGM} revision operators are included while keeping the original spirit of credibility-limited revision.
    Extended credibility-limited revision operators are defined axiomatically. A semantic characterization of extended credibility-limited revision operators that employ total preorders on possible worlds is presented.
 \keywords{Epistemic Space\and Epistemic State\and Credibility-Limited Revision\and Non-Prioritized Revision\and AGM Revision\and Extended\and Inconsistency}
\end{abstract}

\section{Introduction}
\label{sec:introduction}
Much research in belief change theory is on the change of logical theories \cite{KS_FermeHansson2018}. A well-known and widely accepted approach for the revision of logical theories is revision by Alchourron, Gärdenfors and Makinson \cite{KS_AlchourronGaerdenforsMakinson1985} (AGM), which realizes the famous principle of minimal change. 
Another belief change operation in this setting is credibility-limited revision by Hansson, Fermé, Cantwell and Falappa \cite{KS_HanssonFermeCantwellFalappa2001}. This class of operations implements the idea that an (AGM) revision should performed only when the newly arriving information is credible and if the information is not credible, the agent's beliefs are not altered.
Intuitively, credibility-limited revision is a generalization of AGM revision; when one considers all potential information as credible, one would expect that a credibility-limited revision \emph{is} an AGM revision. 

Apart from the classical setting of theory change, belief change is considered in the more general setting of belief change over epistemic states by Darwiche and Pearl \cite{KS_DarwichePearl1997,KS_SchwindKoniecznyPinoPerez2022,KS_Sauerwald2022}. 
In this setting, which has wide applications in iterated belief change \cite{KS_FermeHansson2018}, one does not only consider the beliefs of an agent but also considers extra logical information that guides the belief change process as part of the representation.
To deal with this expressive setting, both above-mentioned kinds of belief changes have been adapted to this setting, i.e., AGM revision by Darwiche and Pearl \cite{KS_DarwichePearl1997} and credibility limited revision by Booth, Ferm{\'{e}}, Konieczny and {Pino P{\'{e}}rez \cite{KS_BoothFermeKoniecznyPerez2012}.
A recent clarification of the Darwiche and Pearl framework is the framework of \emph{belief change for epistemic spaces}~\cite{KS_SchwindKoniecznyPinoPerez2022}.  
Agents' epistemic states are bound to a specific type of representation, and an epistemic space is an abstraction that describes the whole room of all possible epistemic states of an individual agent. Belief change operators for an epistemic space reside within these representational bounds.  
We consider what is called here the unrestricted framework of belief change for epistemic spaces, which means that inconsistent beliefs are permitted, these are often neglected but not always \cite{KS_FermeWassermann2018}.

This paper starts with the observation that when using the unrestricted framework of belief changes for epistemic spaces, the given notion of credibility-limited revision \emph{does not} behave very well in the unrestricted case; all AGM revision operators are excluded, and inconsistent belief sets cannot be handled.
We deal with this observation by providing the following results, which are also the main contributions%
\footnote{Some of these results are already part of the dissertation thesis by the author \cite{KS_Sauerwald2022}.}:
\begin{itemize}
    \item {[Extended Credibility-Limited Revision]} We define \emph{extended credibility-limited revision}, which builds upon credibility-limited revision by Booth et al. \cite{KS_BoothFermeKoniecznyPerez2012}.
    For this, we consider the axiomatic description of credibility-limited revision by Booth et al. and identify one postulate that makes these exclude AGM revision operators and incompatible with inconsistent beliefs. 
    For defining extended credibility-limited revision, we add two postulates to the original postulates by Booth et al. for credibility-limited revision (and remove the postulate which makes them incompatible with AGM revision).
    The additional postulates ensure that operators are excluded which do not match the intuition of credibility-limited revision. 
    
    \item {[Semantic Characterization]} A semantical characterization of extended credibility-limited revision. This characterization is given in terms of functions that assign total preorders to epistemic states, i.e., in 
    the same style as the Darwiche-Pearl representation theorem for revision \cite{KS_DarwichePearl1997}, respectively as in the semantic characterization of credibility-limited revision by Booth et al. \cite{KS_BoothFermeKoniecznyPerez2012}.
    
    \item {[Genuineness]} We show that extended credibility-limited revisions are a genuine extension of credibility-limited revisions by Booth et al. \cite{KS_BoothFermeKoniecznyPerez2012} that include all AGM revision operators.
\end{itemize}
The paper contains the proofs for all propositions and theorems given here. The next section gives the background on propositional logic and order theory. In Section~\ref{sec:epistemic_spaces}, we present epistemic spaces, as well as AGM revision operators for epistemic states \cite{KS_DarwichePearl1997} and credibility-limited revision operators by Booth et al. \cite{KS_BoothFermeKoniecznyPerez2012}.
We observe in Section~\ref{sec:problem} that credibility-limited revision for epistemic spaces does not include AGM revisions for epistemic spaces.
Section~\ref{sec:GCL} introduces extended credibility-limited revision and we consider a semantic characterization of this class of operators.
An example of extended credibility-limited revision is given in Section~\ref{sec:examples_properties}, and we consider some properties of extended credibility-limited revision operators. The last section, Section~\ref{sec:conclusion}, summarises the results presented here.

Before starting with the main content of the paper, we consider some remarks. 
This paper is mainly developed from a technical perspective, and after the introduction we do \emph{not} focus on discussing applications and implications of the results given here and delegate such a discussion to different paper.
From a theoretical perspective, we should be interested in considering belief changes on \emph{arbitrary} epistemic spaces and arbitrary inputs, as we do in this paper. 
A rationale is that this allows us to study belief change independent of specific representations of epistemic states, respectively, in a way that the results apply to \emph{all} possible representations, including those with inconsistent belief sets. 
Doing so has the advantage that the theory applies to application scenarios that have not been anticipated.
One application of this is employing belief change operators as descriptional theories, which is, in my opinion, a prerequisite for using belief change theory in, e.g., approaches like cognitive logics \cite{KS_RagniKernIsbernerBeierleSauerwald2020}.
In that sense, the purpose of this paper goes beyond just generalizing credibility-limited revision; it exemplifies how to generalise belief change operators to arbitrary inputs and representations.

\section{Background}
\label{sec:background}
Let \( \Sigma \) be a non-empty finite propositional signature whose elements are called atoms.
With \( \logicLanguage \) we denote the set of all propositional formulas over \( \Sigma \) defined as usually using Boolean connectives.
We assume that the tautology \( \top \) and the falsum \( \bot \) are elements of \( \logicLanguage \).
The set of all \( \Sigma \)-interpretations is denoted by \( \Omega \) and we write interpretations as strings of atoms from \( \Sigma \) where an bar over an atom indicates that this atom is mapped to \texttt{false} and otherwise to \texttt{true}.
For instance, the interpretation \( \omega = a\overline{b}c \) maps \( a \) to \texttt{true} and \( b \) to \texttt{false} and \( c \) to \texttt{true}.
The models relation \( \models \) between interpretations and formulas is defined as usually and with \( \modelsOf{\alpha}=\{ \omega\in\Omega \mid \omega\models \alpha  \} \)  we denote the set of all models of \( \alpha \).
We say a formula \( \alpha \) logically entails a formula \( \beta \), written \( \alpha \models \beta \), if \( \modelsOf{\alpha}\subseteq\modelsOf{\beta} \) holds.
These notions are lifted to sets of formulas \( X\subseteq \logicLanguage \) as usually, i.e., \( \modelsOf{X}=\bigcap_{\alpha\in X}\modelsOf{\alpha}\) and \(  X\models\alpha\) if \( \modelsOf{X}\subseteq\modelsOf{\alpha} \).
We say that \( X\subseteq \logicLanguage \) is deductively closed if \( X=\Cn(X) \), whereby \( \Cn(X)=\{ \alpha \in \logicLanguage \mid X \models \alpha \} \) is the closure under logical entailment. With \( \logicBeliefSet \) we denote the set of all deductively closed sets.
For \( \alpha\in\logicLanguage \) we define \(X+\alpha=\Cn(X\cup\{\alpha\})  \).
Moreover, for \( M\subseteq \Omega \) we define \( \theoryOf{M}=\{ \alpha\in\logicLanguage\mid M \subseteq \modelsOf{\alpha} \} \).
 A formula \( \alpha\in\logicLanguage \), respectively a set \( X\subseteq \logicLanguage \), is called consistent if \( \modelsOf{\alpha}\neq\emptyset \), respectively \( \modelsOf{X}\neq\emptyset \).
A total preorder \( {\preceq} \) on subset \( M \subseteq \Omega \) is a relation \( {\preceq}\subseteq M \times M \) such that \( {\preceq} \) is total, i.e., for all \( \omega_1,\omega_2 \in M \) holds \( \omega_1 \preceq \omega_2 \) or \( \omega_2\preceq\omega_1 \), and transitive, i.e.,  for all \( \omega_1,\omega_2,\omega_3 \in M \) holds that  \( \omega_1\preceq\omega_2 \) and \( \omega_2\preceq\omega_3 \) imply \( \omega_1\preceq\omega_3 \). Note that totality implies that \( {\preceq} \) is reflexive, i.e., \( \omega\preceq\omega \) holds for all \( \omega\in M \).
A total preorder \( {\ll} \) on \( M \subseteq \Omega \) is called a \emph{linear order}, if \( {\ll} \) is antisymmetric, i.e., for all \( \omega_1,\omega_2 \in M \) holds that \( \omega_1\ll\omega_2 \) and \( \omega_2\ll\omega_1 \) imply \( \omega_1 = \omega_2 \).
The set of minimal elements of \( X\subseteq \Omega \) with respect to \( {\preceq} \) is 
\( \min(X,{\preceq})=\{  \omega\in X \mid \omega \preceq \omega' \text{ for all } \omega'\in X \} \) 
and 
 \( {\simeq} \) denotes the equivalent part of \( {\preceq} \).

\section{Background on Belief Change for Epistemic Spaces}
\label{sec:epistemic_spaces}
\label{sec:es_subsec}
In this work, we model agents by the means of logic. 
Deductive closed sets of formulas,  which we denote from now as \emph{belief set}, represent deductive capabilities.
The interpretations represent worlds that the agent is capable to imagine.
The following notion describes the space of epistemic possibilities of an agent's mind in a general way.
\begin{definition}[\cite{KS_SchwindKoniecznyPinoPerez2022}; adapted]\label{def:epistemic_space}
    A tuple 
    \( \epistemicSpace
    =
    \langle
    \setAllES,\beliefSymb\rangle  \) is called an \emph{epistemic space} if $ \setAllES $ is a non-empty set 
    and $ \beliefSymb : \setAllES \to \logicBeliefSet $.
\end{definition}
\noindent We 
call the elements of \( \setAllES \) \emph{epistemic states} and \pagebreak[3]
use \( \modelsOf{\Psi} \) as shorthand for \( \modelsOf{\beliefsOf{\Psi}} \). %
Within this framework belief change operators are transitions from one epistemic state to another when new beliefs are received, i.e., 
belief change operators for an epistemic space \( \epistemicSpace \) are global objects, functions on all epistemic states in the mathematical sense.
\begin{definition}    
    A \emph{belief change operator for an epistemic space \( \epistemicSpace = \langle \setAllES,\beliefSymb \rangle  \)}  is a function $ \circ : \setAllES \times \logicLanguage \to  \setAllES $.
\end{definition}
The framework of belief change for epistemic spaces can be instantiated to often-considered settings of belief change.
When \( \setAllES \) is the set of all belief sets over \( \logicLanguage \) and \( \beliefsOf{\Psi}=\Psi \), one obtains the classical setting of theory change \cite{KS_AlchourronGaerdenforsMakinson1985}, respectively the setting considered by Katsuno and Mendelzon \cite{KS_KatsunoMendelzon1992}.
In iterated belief change, typical instantiations for \( \epistemicSpace \) are ranking functions by Spohn \cite{KS_Spohn1988} or total preorders~\cite{KS_DarwichePearl1997}.
The notion of an epistemic space by Schwind et al. \cite{KS_SchwindKoniecznyPinoPerez2022} slightly differs from the notion here insofar that here, we \emph{do permit} inconsistent beliefs (cf. Definition~\ref{def:epistemic_space}).
For that reason we denote the framework considered here as \emph{unrestricted}.
We can (nearly) obtain the restricted setting by considering only consistent formulas and demanding that an epistemic space \( \epistemicSpace = \langle \setAllES,\beliefSymb \rangle  \) satisfies the following condition:
\begin{align}
    & \text{If } \Psi\in\setAllES \ksTHEN \beliefsOf{\Psi} \neq \Cn(\bot)	\tag{global consistent}\label{pstl:globalconsistency}
\end{align}

Clearly, to study types of belief changes, one restricts the space of all belief change operators for an epistemic spaces to specific classes of operators.
In the following, we consider such classes of operators.

\label{sec:agm_subsec}
\medskip\noindent\textbf{AGM Revision.}
Revision is the process of incorporating new beliefs into an agent's belief set while maintaining consistency, whenever this is possible.
We use an adaptation of the AGM postulates for revision~\cite{KS_AlchourronGaerdenforsMakinson1985}
for the framework of epistemic spaces {\cite{KS_DarwichePearl1997}, which is inspired by the approach of Katsuno and Mendelzon \cite{KS_KatsunoMendelzon1992}.
        A belief change operator \( \revision \) for an epistemic space \( \epistemicSpace=\tuple{\setAllES,\beliefSymb} \) is called an \emph{(AGM) revision operator  for \( \epistemicSpace \)} if the following postulates are satisfied
        {\cite{KS_DarwichePearl1997}}: %
        \begin{description}
            \item[\normalfont(\textlabel{R1}{pstl:R1})] 
                \( \alpha \in  \beliefsOf{\Psi \revision \alpha}  \)
            \item[\normalfont(\textlabel{R2}{pstl:R2})] 
                 \(   \beliefsOf{\Psi \revision \alpha} = {\beliefsOf{\Psi} + \alpha } \text{ if } \beliefsOf{\Psi} + \alpha  \text{ is consistent } \)
            \item[\normalfont(\textlabel{R3}{pstl:R3})] 
                If \(  \alpha \text{ is consistent, then }  \beliefsOf{\Psi \revision \alpha} \text{ is consistent}  \)
            \item[\normalfont(\textlabel{R4}{pstl:R4})] 
                If \( \alpha \equiv \beta \ksTHEN  \beliefsOf{\Psi \revision \alpha} = \beliefsOf{\Psi \revision \beta}  \)
            \item[\normalfont(\textlabel{R5}{pstl:R5})] 
                \( \beliefsOf{\Psi \revision (\alpha\land\beta)} \subseteq \beliefsOf{\Psi \revision \alpha}+\beta \)
            \item[\normalfont(\textlabel{R6}{pstl:R6})] 
                If \(  \beliefsOf{\Psi\revision\alpha} + \beta \) is consistent, then \(  \beliefsOf{\Psi \revision \alpha}+\beta \subseteq \beliefsOf{\Psi \revision (\alpha\land\beta)}  \)
        \end{description}
    AGM revision is well-known for realizing the principle of minimal change on the prior beliefs when revising. 
    Note that AGM revision in the setting epistemic spaces is expressible, as the model is Turing complete \cite{KS_SauerwaldBeierle2022b}.
    However, in some epistemic spaces no AGM revision operator exist at all \cite{KS_SauerwaldThimm2024}.
\label{sec:clr_subsec}
\medskip\noindent\textbf{Credibility-Limited Revision.}
Credibility-limited revision was introduced by Hansson et al.~\cite{KS_HanssonFermeCantwellFalappa2001} and restricts the process of revision to credible beliefs.
To deal with epistemic states,   \emph{credibility-limited revision} was adapted by Booth et al. \cite{KS_BoothFermeKoniecznyPerez2012}.
A belief change operator \( \clRevision \) for an epistemic space \( \epistemicSpace=\tuple{\setAllES,\beliefSymb} \) is called an \emph{credibility-limited revision operator  for \( \epistemicSpace \)} if the following postulates are satisfied
 {\cite{KS_BoothFermeKoniecznyPerez2012}}:\pagebreak[3]
\begin{description}
	\item[\normalfont(\textlabel{CL1}{pstl:CLR1})] $ \alpha\in \beliefsOf{\Psi\clRevision\alpha} \ksOR \beliefsOf{\Psi\clRevision\alpha} = \beliefsOf{\Psi} $
	\item[\normalfont(\textlabel{CL2}{pstl:CLR2})]$ \beliefsOf{\Psi{\clRevision}\alpha} = \beliefsOf{\Psi}+\alpha $ if $ \beliefsOf{\Psi}+\alpha $ is consistent
	\item[\normalfont(\textlabel{CL3}{pstl:CLR3})] $ \beliefsOf{\Psi\clRevision\alpha} $ is consistent
	\item[\normalfont(\textlabel{CL4}{pstl:CLR4})] If $ \alpha\equiv\beta $, then $ \beliefsOf{\Psi\clRevision\alpha} = \beliefsOf{\Psi\clRevision\beta} $
	\item[\normalfont(\textlabel{CL5}{pstl:CLR5})] If $ \alpha \! \in\! \beliefsOf{\Psi\clRevision\alpha} $ and $ \alpha\models\beta $, then $ \beta \in \beliefsOf{\Psi\clRevision\beta} $
	\item[\normalfont(\textlabel{CL6}{pstl:CLR6})] $ \beliefsOf{\Psi\!\clRevision\!(\alpha\lor\beta)}\! =\!\begin{cases}
		\beliefsOf{\Psi\clRevision\alpha} \text{ or} \\
		\beliefsOf{\Psi\clRevision\beta} \text{ or} \\
		\beliefsOf{\Psi\clRevision\alpha} \cap \beliefsOf{\Psi\clRevision\beta}
	\end{cases}  $
\end{description}
The postulate \eqref{pstl:CLR1} is known as \emph{relative success} and denotes that either the agent keeps its prior beliefs (falling back to prior beliefs) or the belief change is successful in achieving the success condition of revision (the beliefs get accepted for revision).
Through \eqref{pstl:CLR2}, known as \emph{vacuity}, new beliefs are just added when they are not in conflict with $ \beliefsOf{\Psi} $.
The postulate \eqref{pstl:CLR3}, also known as \emph{strong consistency} \cite{KS_HanssonFermeCantwellFalappa2001}, ensures consistency, and by \eqref{pstl:CLR4} the operator has to implement independence of syntax.
Postulate \eqref{pstl:CLR5} guarantees that when the revision by a belief $ \alpha $ is successful, then it is also successful for every more general belief $ \beta $.
The trichotomy postulate \eqref{pstl:CLR6} guarantees decomposability of revision of disjunctive beliefs.

\section{Observations on AGM Revision and Credibility-Limited Revision in the Unrestricted Framework}
\label{sec:problem}

The approach for credibility-limited revision for epistemic spaces, as given by Booth et al. (cf. Section~\ref{sec:clr_subsec}), is made with the restriction to consider only consistent beliefs. In the unrestricted framework of epistemic spaces, we also permit inconsistent beliefs, and next, we observe now that in these cases, no credibility-limited revision exists at all.
\begin{proposition}\label{prop:no_cl_inunrestricted}
    Let \( \epistemicSpace=\tuple{\setAllES,\beliefSymb} \) be an 
    epistemic space and let  $ \clRevision $ be a belief change operator for \( \epistemicSpace \). 
    If \( \epistemicSpace \) is not  \hyperref[pstl:globalconsistency]{globally consistent}, then \( \clRevision \) is not a credibility-limited revision operator.
\end{proposition}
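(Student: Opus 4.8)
The plan is to exploit the asymmetry between the \emph{unconditional} strong-consistency postulate \postulateRef{CLR3} and the relative-success postulate \postulateRef{CLR1}. First I would unfold the hypothesis: since \( \epistemicSpace \) fails to be globally consistent, the condition \eqref{pstl:globalconsistency} is violated, so there is a state \( \Psi_0 \in \setAllES \) whose belief set is inconsistent, i.e.\ \( \beliefsOf{\Psi_0} = \Cn(\bot) \).

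Next I would argue by contradiction. Assume \( \clRevision \) is a credibility-limited revision operator, and consider the revision of \( \Psi_0 \) by the falsum \( \bot \). Instantiating \postulateRef{CLR1} with \( \Psi_0 \) and \( \bot \) gives two cases. In the first, \( \bot \in \beliefsOf{\Psi_0 \clRevision \bot} \); since \( \beliefsOf{\Psi_0 \clRevision \bot} \) is deductively closed and contains \( \bot \), it equals \( \Cn(\bot) \) and is therefore inconsistent. In the second, \( \beliefsOf{\Psi_0 \clRevision \bot} = \beliefsOf{\Psi_0} = \Cn(\bot) \), again inconsistent. In either case \( \beliefsOf{\Psi_0 \clRevision \bot} \) is inconsistent, which contradicts \postulateRef{CLR3}. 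Hence no credibility-limited revision operator for \( \epistemicSpace \) can exist, and in particular the given \( \clRevision \) is not one.

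There is no genuinely hard step here; the only point worth flagging is \emph{why} the analogous statement does not hold for AGM revision. It is precisely that \postulateRef{R3} is conditioned on the consistency of the input, whereas \postulateRef{CLR3} is not — so picking \( \alpha = \bot \) is harmless for AGM revision but fatal for credibility-limited revision. One could equivalently run the argument for an arbitrary input \( \alpha \), using \postulateRef{CLR1} together with the observation that \( \beliefsOf{\Psi_0} + \alpha = \Cn(\bot) \) regardless of \( \alpha \), but the choice \( \alpha = \bot \) yields the shortest path. I would also remark that the proof uses only \postulateRef{CLR1} and \postulateRef{CLR3}, so the impossibility is robust to dropping the remaining postulates \postulateRef{CLR2}, \postulateRef{CLR4}, \postulateRef{CLR5}, and \postulateRef{CLR6}.
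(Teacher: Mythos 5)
Your proposal is correct and follows exactly the paper's argument: take the state $\Psi_0$ with $\beliefsOf{\Psi_0}=\Cn(\bot)$, revise by $\bot$, and derive a contradiction between \eqref{pstl:CLR1} (which forces the result to be inconsistent in both of its cases) and \eqref{pstl:CLR3}. Your explicit case split on \eqref{pstl:CLR1} and the remark that only these two postulates are needed merely make the paper's one-line contradiction more detailed, not different.
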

\begin{proof}
    If \( \epistemicSpace \) is not  \hyperref[pstl:globalconsistency]{globally consistent}, then there is some epistemic state \( \Psi_{\bot}\in\setAllES \)  with \( \modelsOf{\Psi_{\bot}}=\emptyset \), i.e., \( \beliefsOf{\Psi_{\bot}}=\Cn(\bot) \).
    Suppose now that \( \clRevision \) is a credibility-limited revision operator.
    Because of that \( \clRevision \) satisfies \eqref{pstl:CLR1} and \eqref{pstl:CLR3}.
    From \eqref{pstl:CLR3}, we obtain that \( \modelsOf{\Psi_{\bot}\clRevision\bot}\neq\emptyset \) holds.
    This is a contradiction, because due to \eqref{pstl:CLR1}, we also have that \(  \modelsOf{\Psi_{\bot}\clRevision\bot} =\emptyset \) holds.
\end{proof}

When consider belief changes in the unrestricted framework of epistemic spaces, we observe that AGM revision operators are not credibility-limited revision operators.
\begin{proposition}\label{prop:clr_violate_agm_rev}
    Let \( \epistemicSpace=\tuple{\setAllES,\beliefSymb} \) be an epistemic space.
    Every AGM revision operator for \( \epistemicSpace \) is not a credibility-limited revision operator for \( \epistemicSpace \).
\end{proposition}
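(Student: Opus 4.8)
The plan is to exploit the single trivial input $\bot$, on which the AGM postulates and the credibility-limited postulates flatly disagree. Since $\epistemicSpace=\tuple{\setAllES,\beliefSymb}$ is an epistemic space, $\setAllES$ is non-empty, so I fix an arbitrary $\Psi\in\setAllES$; because the framework considered here is unrestricted, $\bot\in\logicLanguage$ is an admissible input and $\Psi\revision\bot$ is well-defined.

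First I would apply \eqref{pstl:R1} with the input $\bot$: it gives $\bot\in\beliefsOf{\Psi\revision\bot}$, hence $\beliefsOf{\Psi\revision\bot}=\Cn(\bot)$, i.e. $\modelsOf{\Psi\revision\bot}=\emptyset$. Thus \emph{every} AGM revision operator necessarily produces an inconsistent belief set on input $\bot$. Now assume for contradiction that $\revision$ is also a credibility-limited revision operator. Then it satisfies \eqref{pstl:CLR3}, which forces $\beliefsOf{\Psi\revision\bot}$ to be consistent --- contradicting the previous line. Hence $\revision$ is not a credibility-limited revision operator. (One could instead pair \eqref{pstl:R1} with \eqref{pstl:CLR1} exactly as in the proof of Proposition~\ref{prop:no_cl_inunrestricted}, but combining \eqref{pstl:R1} with \eqref{pstl:CLR3} yields the most direct clash.)

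The only real content of the statement --- and what passes for the ``obstacle'' --- is the observation that admitting $\bot$ as an input is precisely what the unrestricted framework does, and that this one input already suffices to separate the two classes of operators; everything else is immediate from the definitions. In particular the argument is entirely independent of whether $\epistemicSpace$ is \hyperref[pstl:globalconsistency]{globally consistent}, so it strengthens Proposition~\ref{prop:no_cl_inunrestricted} rather than merely reusing it.
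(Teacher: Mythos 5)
Your proof is correct and takes essentially the same route as the paper: both derive $\modelsOf{\Psi\revision\bot}=\emptyset$ from \eqref{pstl:R1} applied to the input $\bot$ and then observe that this directly contradicts \eqref{pstl:CLR3}. The extra justification you give (deductive closure of belief sets forcing $\beliefsOf{\Psi\revision\bot}=\Cn(\bot)$, and non-emptiness of $\setAllES$ supplying a witness $\Psi$) is sound and merely makes explicit what the paper leaves implicit.
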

\begin{proof}
For each AGM revision operator \( \revision \) for epistemic spaces holds \( \modelsOf{\Psi\revision\bot}=\emptyset \)  due to \eqref{pstl:R1} (as in the setting of theory change). 
Because of that, \( \revision \) violates \eqref{pstl:CLR3}, as \eqref{pstl:CLR3} demands that \( \modelsOf{\Psi\revision\bot}\neq\emptyset \) holds.
Consequently, \( \revision \) is not a credibility-limited revision operator.
\end{proof}

To describe Proposition~\ref{prop:clr_violate_agm_rev} from the viewpoint of classes of operators, we define the respective classes of operators.
With \( \textsf{AGMRev}(\epistemicSpace) \) we denote the class of all AGM revision operators for \( \epistemicSpace \), i.e., 
    \( \textsf{AGMRev}(\epistemicSpace) = \{\     \revision:\setAllES\times \logicLanguage \to \setAllES \mid \revision \text{ satisfies \eqref{pstl:R1}--\eqref{pstl:R6}}     \ \} \),
and with \( \textsf{CLRev}(\epistemicSpace) \) we denote the class of all credibility-limited revision operators for \( \epistemicSpace \), i.e., 
    \( \textsf{CLRev}(\epistemicSpace) = \{\   \clRevision  :\setAllES\times \logicLanguage \to \setAllES \mid \clRevision \text{ satisfies \eqref{pstl:CLR1}--\eqref{pstl:CLR6}}     \ \} \).
Proposition~\ref{prop:clr_violate_agm_rev} yields the following results.
\begin{corollary}
    For each epistemic space \( \epistemicSpace \) holds:
    \begin{equation*}
        \textsf{AGMRev}(\epistemicSpace) \cap \textsf{CLRev}(\epistemicSpace) = \emptyset
    \end{equation*}
\end{corollary}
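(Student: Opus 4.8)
The plan is to obtain the statement as an immediate consequence of Proposition~\ref{prop:clr_violate_agm_rev}, which has already done the substantive work. I would argue by contradiction: suppose $\textsf{AGMRev}(\epistemicSpace) \cap \textsf{CLRev}(\epistemicSpace) \neq \emptyset$, and pick a belief change operator $\circ : \setAllES \times \logicLanguage \to \setAllES$ in this intersection. Since $\circ \in \textsf{AGMRev}(\epistemicSpace)$, the operator $\circ$ satisfies \eqref{pstl:R1}--\eqref{pstl:R6}, i.e., it is an AGM revision operator for $\epistemicSpace$. By Proposition~\ref{prop:clr_violate_agm_rev}, it follows that $\circ$ is \emph{not} a credibility-limited revision operator for $\epistemicSpace$, hence $\circ \notin \textsf{CLRev}(\epistemicSpace)$, contradicting the choice of $\circ$. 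Therefore no such operator exists and the intersection is empty.

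I would also note in passing that the argument covers the degenerate situation in which $\textsf{AGMRev}(\epistemicSpace) = \emptyset$ (which, as recalled above, can occur for some epistemic spaces): in that case the intersection is empty for trivial reasons, so the conclusion holds unconditionally for every epistemic space $\epistemicSpace$.

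There is no real obstacle here: the entire content has been absorbed into Proposition~\ref{prop:clr_violate_agm_rev}, whose proof in turn rests on the simple clash between \eqref{pstl:R1} — which forces $\modelsOf{\Psi \revision \bot} = \emptyset$ for any AGM revision operator $\revision$ — and \eqref{pstl:CLR3}, which demands that every revision outcome be consistent. The only care needed at the level of the corollary is to phrase it so that it holds vacuously when there are no AGM revision operators to begin with.
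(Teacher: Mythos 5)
Your proof is correct and matches the paper's intent exactly: the corollary is stated as an immediate consequence of Proposition~\ref{prop:clr_violate_agm_rev}, and your contradiction argument (plus the remark about the vacuous case) is precisely the routine unpacking the paper leaves implicit.
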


\section{Extended Credibility-Limited Revision}
\label{sec:GCL}
In the following, we extend credibility-limited revision \cite{KS_BoothFermeKoniecznyPerez2012} so that AGM revision operators are not excluded in the unrestricted framework of epistemic spaces and that operators exist, even when inconsistent beliefs are permitted.
At first, we will observe that just dropping \eqref{pstl:CLR3} on the postulate side will include belief change operators with undesired behaviour. 
We introduce two postulates that exclude operators with undesired behaviour, which are meant to replace \eqref{pstl:CLR3}. By employing these postulates we define extended credibility-limited revision.
This sections ends with a semantic characterization of extended credibility-limited revision.

\subsection{Credibility-Limited Revision Without \eqref{pstl:CLR3}}
In Section~\ref{sec:problem}, we showed that AGM revision operators are not credibility-limited revision operators in the unrestricted stetting of belief change for epistemic spaces and when inconsistent beliefs are permitted, no credibility-limited revision operator exists. The cause for this is the postulate \eqref{pstl:CLR3}  of credibility-limited revision, e.g., AGM revision operators are incompatible with the postulate \eqref{pstl:CLR3}.
However, excluding \eqref{pstl:CLR3}, respectively by just taking \eqref{pstl:CLR1}, \eqref{pstl:CLR2}, and \eqref{pstl:CLR4}--\eqref{pstl:CLR6}, we would observe drastic consequences, because we would permit operators that would yield randomly inconsistent states for certain inputs. The following example contains a fairly simple operator which has such a behaviour.
\begin{example}\label{ex:cl:ncl3unwanted}
	Let \( \Sigma=\{a\} \) and let \( \epistemicSpace_{\bot,a}=\tuple{\setAllES,\beliefSymb} \) be the epistemic space given by:
	\begin{align*}
	\setAllES & =\{ \Psi_\bot,\Psi_{a} \} &	\modelsOf{\Psi_\bot} & = \emptyset & \modelsOf{\Psi_{a}}      & = \{ ab \}      \ .
	\end{align*}     
    Note that the function \( \beliefSymb \) is implicitly defined via \( \beliefsOf{\Psi}=\Th(\modelsOf{\Psi}) \).
	We define a belief change operator \( \clRevision \) for \( \epistemicSpace_{\bot,a} \) as follows:
	\begin{equation*}
		\Psi \clRevision \alpha = \begin{cases}
			\Psi_{a} &\ksIF \modelsOf{\alpha}=\{{a}\} \\
			\Psi_{\bot} &\ksIF \modelsOf{\alpha}=\{\nAtom{a}\} \ksOR \modelsOf{\alpha}=\emptyset \\
			\Psi & \ksIF \modelsOf{\alpha}=\{{a},\nAtom{a}\}
		\end{cases}
	\end{equation*}
    
    \pagebreak[3]
	\noindent Figure \ref{fig:ex:cl:ncl3unwanted} illustrates \( \clRevision \) graphically. We make two observations regarding \( \clRevision \):
	\begin{itemize}\setlength{\itemsep}{0pt}
		\item[]\hspace{-1.75ex}{\normalfont\textbf{Observation I.}} There  are situations where \( \clRevision \) yields an inconsistent belief set for a consistent formula (on a consistent belief set), e.g., we have \( \modelsOf{\Psi_{a}\clRevision \negOf{a}}=\emptyset \).
		\item[]\hspace{-1.75ex}{\normalfont\textbf{Observation II.}} There  are situations where \( \clRevision \) yields a consistent belief set for a consistent formula \( \alpha \)   (on an inconsistent belief set) and yields an inconsistent belief set for some consequences of \( \alpha \), e.g., we have \( \modelsOf{\Psi_{\bot}\clRevision a}=\{a\} \) and \( \modelsOf{\Psi_{\bot}\clRevision \top}=\emptyset \).
	\end{itemize}	
\end{example}
\noindent Indeed, we obtained the intended behaviour.
\begin{proposition}\label{prop:operatorviolationCLR3}
    The operator \( \clRevision \) from Example~\ref{ex:cl:ncl3unwanted} satisfies \eqref{pstl:CLR1}--\eqref{pstl:CLR6} except for \eqref{pstl:CLR3}.
\end{proposition}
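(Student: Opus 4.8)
The plan is to verify each of \eqref{pstl:CLR1}, \eqref{pstl:CLR2}, and \eqref{pstl:CLR4}--\eqref{pstl:CLR6} directly against the definition of $\clRevision$, which is feasible because everything is finite. First I would record the full behaviour of the operator once: the signature gives $\Omega=\{a,\negOf a\}$, so up to logical equivalence there are only four inputs ($\bot$, $a$, $\negOf a$, $\top$) and only two epistemic states, with $\beliefsOf{\Psi_\bot}=\Cn(\bot)=\logicLanguage$, $\modelsOf{\Psi_\bot}=\emptyset$, and $\beliefsOf{\Psi_a}=\Cn(a)$, $\modelsOf{\Psi_a}=\{a\}$; this yields an eight-entry table that I would compute from the case distinction in Example~\ref{ex:cl:ncl3unwanted}. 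Since $\clRevision$ is defined purely through $\modelsOf{\cdot}$, and $\alpha\equiv\beta$ iff $\modelsOf\alpha=\modelsOf\beta$, the operator is well defined and \eqref{pstl:CLR4} is immediate.

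The key observation for \eqref{pstl:CLR1}, \eqref{pstl:CLR5}, and for locating the failure of \eqref{pstl:CLR3} is that $\alpha\in\beliefsOf{\Psi\clRevision\alpha}$ holds in \emph{every} case: if $\modelsOf\alpha=\{a\}$ the output is $\Psi_a$ and $\alpha\equiv a\in\Cn(a)$; if $\modelsOf\alpha\in\{\emptyset,\{\negOf a\}\}$ the output is $\Psi_\bot$, whose belief set is $\logicLanguage$ and hence contains $\alpha$; and if $\modelsOf\alpha=\Omega$ then $\alpha$ is a tautology and so lies in every belief set, in particular in $\beliefsOf{\Psi}$. Consequently the first disjunct of \eqref{pstl:CLR1} always holds, so \eqref{pstl:CLR1} holds; and since both the hypothesis $\alpha\in\beliefsOf{\Psi\clRevision\alpha}$ and the conclusion $\beta\in\beliefsOf{\Psi\clRevision\beta}$ of \eqref{pstl:CLR5} are always satisfied, \eqref{pstl:CLR5} holds vacuously. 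The violation of \eqref{pstl:CLR3} is then exactly Observation~I in Example~\ref{ex:cl:ncl3unwanted}: $\Psi_a\clRevision\negOf a=\Psi_\bot$, so $\beliefsOf{\Psi_a\clRevision\negOf a}=\Cn(\bot)$ is inconsistent although $\negOf a$ is consistent.

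It remains to treat \eqref{pstl:CLR2} and \eqref{pstl:CLR6}. For \eqref{pstl:CLR2}: $\beliefsOf{\Psi}+\alpha$ is consistent iff $\modelsOf{\Psi}\cap\modelsOf\alpha\neq\emptyset$, which for our two states forces $\Psi=\Psi_a$ and $a\in\modelsOf\alpha$, i.e.\ $\alpha\equiv a$ or $\alpha\equiv\top$; in both sub-cases the definition gives $\Psi_a\clRevision\alpha=\Psi_a$ and $\beliefsOf{\Psi_a}+\alpha=\Cn(a)=\beliefsOf{\Psi_a}$, so \eqref{pstl:CLR2} holds. For \eqref{pstl:CLR6}, write $f_\Psi(M)$ for $\beliefsOf{\Psi\clRevision\alpha}$ when $\modelsOf\alpha=M$; reading off the table, $f_\Psi$ takes only the two values $\logicLanguage$ and $\Cn(a)$, with $f_{\Psi_a}(M)=\Cn(a)$ precisely when $a\in M$, and $f_{\Psi_\bot}(M)=\Cn(a)$ precisely when $M=\{a\}$. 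Using $\modelsOf{\alpha\lor\beta}=\modelsOf\alpha\cup\modelsOf\beta$, I would finish with a short case split on $f_\Psi(\modelsOf\alpha\cup\modelsOf\beta)$: if it equals $\Cn(a)$, then already one of $f_\Psi(\modelsOf\alpha),f_\Psi(\modelsOf\beta)$ equals $\Cn(a)$ (for $\Psi_a$ because $a$ lies in one of the two model sets; for $\Psi_\bot$ because the union can be $\{a\}$ only if one of the two sets already is), matching the first or second branch of \eqref{pstl:CLR6}; and if it equals $\logicLanguage$, then at least one of $f_\Psi(\modelsOf\alpha),f_\Psi(\modelsOf\beta)$ equals $\logicLanguage$ (were both $\Cn(a)$, both model sets would be $\{a\}$, forcing the union to be $\{a\}$ too), again matching a branch. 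Hence \eqref{pstl:CLR6} holds. There is no genuine obstacle here — the whole argument is a finite check — and the only points needing a little care are the disjunctions in \eqref{pstl:CLR1} and \eqref{pstl:CLR6}, which are neutralised respectively by the observation that $\alpha$ always ends up believed and by the observation that each $f_\Psi$ is two-valued.
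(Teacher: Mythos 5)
Your argument is correct and takes essentially the same route as the paper's proof: a direct finite check, built on the same key observation that $\alpha\in\beliefsOf{\Psi\clRevision\alpha}$ holds for every input (which settles \eqref{pstl:CLR1} and \eqref{pstl:CLR5} at once), with the violation of \eqref{pstl:CLR3} read off from Observation~I and a case analysis for \eqref{pstl:CLR6}. The only slip is cosmetic: in your \eqref{pstl:CLR6} case where the revision by the disjunction yields $\logicLanguage$, the parenthetical claim that both model sets would be $\{a\}$ is literally true only for $\Psi_\bot$ (for $\Psi_a$ they would merely both contain $a$), but the intended contradiction goes through either way.
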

\begin{proof}
    Violation of \eqref{pstl:CLR3} is given by Example~\ref{ex:cl:ncl3unwanted}.
    From the definition of \( \clRevision \) we obtain that \( \clRevision \) satisfies \eqref{pstl:CLR1}, \eqref{pstl:CLR2}, and \eqref{pstl:CLR4}.
    We show satisfaction of \eqref{pstl:CLR5} and \eqref{pstl:CLR6}:
    \begin{itemize}\setlength{\itemsep}{0pt}
        \item[]\hspace{-1.75ex}\emph{\eqref{pstl:CLR5}} Note that we have \( \alpha\in\beliefsOf{\Psi} \) for each \( \Psi\in\setAllES \) and for each \( \alpha\in\propLang \). Consequently, \( \clRevision \) satisfies \eqref{pstl:CLR5}.
        \item[]\hspace{-1.75ex}\emph{\eqref{pstl:CLR6}} Let \( \gamma=\alpha\lor\beta \). For \( \alpha\equiv\beta \) we obtain \( \modelsOf{\Psi\clRevision\gamma}=\modelsOf{\Psi\clRevision\alpha} =\modelsOf{\Psi\clRevision\beta}\) from \eqref{pstl:CLR4}. In the following we assume \( \modelsOf{\alpha}\neq\modelsOf{\beta} \). Observe that this implies \( \modelsOf{\gamma}\neq\emptyset \). Next, we consider two subcases for \( \Psi\in\setAllES \):
        \begin{itemize}\setlength{\itemsep}{0pt}
            \item[]\hspace{-1.75ex}\emph{\( \Psi=\Psi_{\bot} \).}  
            Observe that we have \( \modelsOf{\Psi\clRevision\varphi}=\modelsOf{\Psi}=\emptyset \) for all \( \varphi \) with \( \modelsOf{\varphi}\neq\{a\} \).
            Consequently, if \( a\notin\modelsOf{\gamma}  \), then we obtain \( \modelsOf{\Psi\clRevision\gamma}=\modelsOf{\Psi}=\modelsOf{\Psi\clRevision\alpha}=\modelsOf{\Psi\clRevision\beta}=\emptyset \). 
            If \( \modelsOf{\gamma}=\{a\} \), then we have \( \modelsOf{\Psi\clRevision\gamma}=\{a\} \) and we obtain from \( \modelsOf{\alpha}\neq\modelsOf{\beta} \) that either \( \modelsOf{\alpha}=\{a\} \) or \( \modelsOf{\beta}=\{a\} \).
            Thus, we obtain either \( \modelsOf{\Psi\clRevision\gamma}=\modelsOf{\Psi\clRevision\alpha} \) or \( \modelsOf{\Psi\clRevision\gamma}=\modelsOf{\Psi\clRevision\beta} \) by \eqref{pstl:CLR4}.
            We consider the remaining case of \( \{a\}\subsetneq\modelsOf{\gamma}  \). Then we  have  \( \modelsOf{\Psi\clRevision\gamma}=\emptyset \).
            From \( \modelsOf{\alpha}\neq\modelsOf{\beta} \) we obtain  that \( \modelsOf{\alpha}\neq\{a\} \) or \( \modelsOf{\beta}\neq\{a\} \) holds.
            Thus, we obtain either \( \modelsOf{\Psi\clRevision\gamma}=\modelsOf{\Psi\clRevision\alpha} \) or \( \modelsOf{\Psi\clRevision\gamma}=\modelsOf{\Psi\clRevision\beta} \).
            
            \item[]\hspace{-1.75ex}\emph{\( \Psi=\Psi_{a} \).} 
            Observe that we have \( \modelsOf{\Psi\clRevision\varphi}=\modelsOf{\Psi}=\{a\} \) for all \( \varphi \) with \( \modelsOf{\varphi}\not\subseteq\{\nAtom{a}\} \).
            Consequently, if \( \nAtom{a}\notin\modelsOf{\gamma}  \), then we obtain \( \modelsOf{\Psi\clRevision\gamma}=\modelsOf{\Psi}=\modelsOf{\Psi\clRevision\alpha}=\modelsOf{\Psi\clRevision\beta}=\emptyset \). 
            If \( \modelsOf{\gamma}=\{\nAtom{a}\} \), then we have \( \modelsOf{\Psi\clRevision\gamma}=\emptyset \) and we obtain from \( \modelsOf{\alpha}\neq\modelsOf{\beta} \) that either \( \modelsOf{\alpha}=\{\nAtom{a}\} \) or \( \modelsOf{\beta}=\{\nAtom{a}\} \).
            Thus, we obtain either \( \modelsOf{\Psi\clRevision\gamma}=\modelsOf{\Psi\clRevision\alpha} \) or \( \modelsOf{\Psi\clRevision\gamma}=\modelsOf{\Psi\clRevision\beta} \) by \eqref{pstl:CLR4}.
            We consider the remaining case of \( \{\nAtom{a}\}\subsetneq\modelsOf{\gamma}  \). Then we have  \( \modelsOf{\Psi\clRevision\gamma}=\{a\} \).
            From \( \modelsOf{\alpha}\neq\modelsOf{\beta} \) we obtain  that \( \modelsOf{\alpha}\neq\{\nAtom{a}\} \) or \( \modelsOf{\beta}\neq\{\nAtom{a}\} \) holds.
            Thus, we obtain either \( \modelsOf{\Psi\clRevision\gamma}=\modelsOf{\Psi\clRevision\alpha} \) or \( \modelsOf{\Psi\clRevision\gamma}=\modelsOf{\Psi\clRevision\beta} \).
        \end{itemize}
    \end{itemize}
    In summary, \( \clRevision \)  satisfies  \eqref{pstl:CLR1}--\eqref{pstl:CLR6}  except for \eqref{pstl:CLR3}.
\end{proof}
\begin{figure}[t]
	\centering\begin{tikzpicture}
            \tikzstyle{esnode}=[draw,fill=gray!15,text=black,minimum width=2.5em,minimum height=1.75em]
            \tikzstyle{myedge}=[->,thick,-{latex}]
            
            \tikzstyle{myedgenode}=[sloped]

            \node (psiA) [esnode,anchor=west] at (0,0) {\(\Psi_{a}\)};
            \node (psiB) [esnode,anchor=west] at (4,0) {\(\Psi_{\bot}\)};
            
            \draw (psiA) edge [loop above,thick] node {$*$}  (psiA) ;
            \draw (psiB) edge [loop above,thick] node {$*$}  (psiA) ;
            
            \draw (psiA) edge [myedge,bend left=15] node [myedgenode,above] {$\bot,\neg a$}  (psiB);
            \draw (psiB) edge [myedge,bend left=15] node [myedgenode,below] {$a$}  (psiA);
    \end{tikzpicture}
	\caption{Graphical representation of the operator \( \clRevision \) given in Example \ref{ex:cl:ncl3unwanted}.}\label{fig:ex:cl:ncl3unwanted}
\end{figure}
\subsection{Defining Extended Credibility-Limited Revision}
For extended credibility-limited revision we replace \eqref{pstl:CLR3} by postulates that prevent the behaviour given in Observation I and Observation II in Example \ref{ex:cl:ncl3unwanted}. 
The first postulate is
\begin{description}
	\item[\normalfont(\textlabel{CL3wcp}{pstl:CLR3wcp})] If \( \beliefsOf{\Psi\clRevision\alpha} \) is inconsistent, then \( \beliefsOf{\Psi} \) or \( \alpha \) is inconsistent.
\end{description}
which is already known in its contrapositive formulation,
\begin{description}
	\item[\normalfont(\textlabel{WCP}{pstl:WCP})] If \( \beliefsOf{\Psi} \) and \( \alpha \)  are consistent, then \( \beliefsOf{\Psi\clRevision\alpha} \) is consistent.
\end{description}
as \emph{weak consistency preservation} \cite{KS_HanssonFermeCantwellFalappa2001,KS_KatsunoMendelzon1991}.
The postulate \eqref{pstl:CLR3wcp} states that  the inconsistency of the result of a change on \( \Psi \) by \( \alpha \) is rooted in inconsistency of either \( \beliefsOf{\Psi} \) or \( \alpha \).
Moreover, we will assume satisfaction of the following postulate:
\begin{description}
	\item[\normalfont(\textlabel{CL3u}{pstl:CLR3w})] If $ \beliefsOf{\Psi\clRevision\alpha} $ is consistent and \( \alpha\models\beta \), then $ \beliefsOf{\Psi\clRevision\beta} $ is consistent.
\end{description}
The postulate \eqref{pstl:CLR3w} states that the consistency of a change on \( \Psi \) by \( \alpha \) is inherited \enquote{upward} to all changes on \( \Psi \) by consequences of \( \alpha \).
Regarding our observations in Example \ref{ex:cl:ncl3unwanted}: the postulate \eqref{pstl:CLR3wcp} prevents situations like in Observation~I, and the postulate \eqref{pstl:CLR3w} rules out situations mentioned in Observation~II of Example \ref{ex:cl:ncl3unwanted}.
Considering  \eqref{pstl:CLR3}, \eqref{pstl:CLR3w}, and \eqref{pstl:CLR3wcp} yields directly the interrelation of these postulates.
\begin{proposition}
    Let \( \epistemicSpace=\tuple{\setAllES,\beliefSymb} \) be an epistemic space and \( \circ \) be a belief change operator for \( \epistemicSpace \). If \( \circ \) satisfies \eqref{pstl:CLR3}, then \( \circ \) satisfies \eqref{pstl:CLR3w} and \eqref{pstl:CLR3wcp}.
\end{proposition}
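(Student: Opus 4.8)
This is an immediate consequence of the universal strength of \eqref{pstl:CLR3}. The plan is to observe that \eqref{pstl:CLR3} asserts, for \emph{every} epistemic state \( \Psi \) and \emph{every} formula \( \gamma \), that \( \beliefsOf{\Psi\clRevision\gamma} \) is consistent; hence any implication whose conclusion is ``\( \beliefsOf{\Psi\clRevision\gamma} \) is consistent'' holds trivially, and any implication whose premise is ``\( \beliefsOf{\Psi\clRevision\gamma} \) is inconsistent'' holds vacuously.

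Concretely, I would argue as follows. Assume \( \circ \) satisfies \eqref{pstl:CLR3}. For \eqref{pstl:CLR3w}: suppose \( \beliefsOf{\Psi\clRevision\alpha} \) is consistent and \( \alpha\models\beta \); applying \eqref{pstl:CLR3} to the pair \( (\Psi,\beta) \) yields that \( \beliefsOf{\Psi\clRevision\beta} \) is consistent, which is exactly the conclusion required (the hypotheses are not even needed). For \eqref{pstl:CLR3wcp}: applying \eqref{pstl:CLR3} to the pair \( (\Psi,\alpha) \) shows that \( \beliefsOf{\Psi\clRevision\alpha} \) is always consistent, so the premise ``\( \beliefsOf{\Psi\clRevision\alpha} \) is inconsistent'' is never met and the implication holds vacuously. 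Therefore \( \circ \) satisfies both \eqref{pstl:CLR3w} and \eqref{pstl:CLR3wcp}.

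There is no real obstacle here; the only thing to be careful about is to phrase the vacuous/trivial steps cleanly rather than introducing a spurious case distinction. The statement is included mainly to record the logical hierarchy \eqref{pstl:CLR3} \( \Rightarrow \) \eqref{pstl:CLR3w} \( \wedge \) \eqref{pstl:CLR3wcp} before showing, in the sequel, that the converse fails and that the weaker pair is what makes room for AGM revision operators.
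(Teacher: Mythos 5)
Your proof is correct and follows exactly the paper's own argument: \eqref{pstl:CLR3} makes the antecedent of \eqref{pstl:CLR3wcp} unsatisfiable (vacuous truth) and the consequent of \eqref{pstl:CLR3w} always true (trivial truth). Nothing to add.
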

\begin{proof}
    Suppose that \( \circ \) satisfies \eqref{pstl:CLR3}.
    Then, the antecedent of  \eqref{pstl:CLR3wcp} is never fulfilled, and hence,  \eqref{pstl:CLR3wcp}  is always satisfied by \( \circ \).
    For \eqref{pstl:CLR3w}, observe that the consequent of \eqref{pstl:CLR3w} is always fulfilled by \( \circ \). Consequently,  \eqref{pstl:CLR3w} is always satisfied by \( \circ \).
\end{proof}

Given these postulates, we define extended credibility-limited revision operators for epistemic spaces in the following as operators that satisfy \eqref{pstl:CLR1}, \eqref{pstl:CLR2}, \eqref{pstl:CLR3wcp}, \eqref{pstl:CLR3w} and \eqref{pstl:CLR4}--\eqref{pstl:CLR6}. For the sake of clarity, we give this set of postulates its own naming.
\pagebreak[3]
\begin{definition}[Extended Credibility-Limited Revision]\label{def:clr}
	Let \( \epistemicSpace=\tuple{\setAllES,\beliefSymb} \) be an epistemic space.
	A belief change operator $ \clRevision $ for \( \epistemicSpace \) is an \emph{extended credibility-limited revision operator for \( \epistemicSpace \)} if $ \clRevision $ satisfies:
	\begin{description}
		\item[\normalfont(\textlabel{ECL1}{pstl:GCLR1})] $ \alpha\in \beliefsOf{\Psi\clRevision\alpha} \ksOR \beliefsOf{\Psi\clRevision\alpha} = \beliefsOf{\Psi} $
		\item[\normalfont(\textlabel{ECL2}{pstl:GCLR2})] \(   \beliefsOf{\Psi \clRevision \alpha} \,{=}\, {\beliefsOf{\Psi} \,{+}\, \alpha } \text{ if } \beliefsOf{\Psi} \,{+}\, \alpha  \text{ is consistent } \)
		\item[\normalfont(\textlabel{ECL3}{pstl:GCLR3})] If \( \beliefsOf{\Psi\clRevision\alpha} \) is inconsistent, then \( \beliefsOf{\Psi} \) or \( \alpha \) is inconsistent
		\item[\normalfont(\textlabel{ECL4}{pstl:GCLR4})] If $ \beliefsOf{\Psi\clRevision\alpha} $ is consistent and \( \alpha\models\beta \), then $ \beliefsOf{\Psi\clRevision\beta} $ is consistent
		\item[\normalfont(\textlabel{ECL5}{pstl:GCLR5})] If $ \alpha\equiv\beta $, then $ \beliefsOf{\Psi\clRevision\alpha} = \beliefsOf{\Psi\clRevision\beta} $
		\item[\normalfont(\textlabel{ECL6}{pstl:GCLR6})] If $ \alpha \! \in\! \beliefsOf{\Psi\clRevision\alpha} $ and $ \alpha\models\beta $, then $ \beta \in \beliefsOf{\Psi\clRevision\beta} $
		\item[\normalfont(\textlabel{ECL7}{pstl:GCLR7})] $ \beliefsOf{\Psi\!\clRevision\!(\alpha\lor\beta)}\! =\!\begin{cases}
			\beliefsOf{\Psi\clRevision\alpha} \text{ or} \\
			\beliefsOf{\Psi\clRevision\beta} \text{ or} \\
			\beliefsOf{\Psi\clRevision\alpha} \cap \beliefsOf{\Psi\clRevision\beta}
		\end{cases}  $
	\end{description}
\end{definition}
\subsection{Semantic Characterization}
Next, we characterize extended credibility-limited revision operators semantically.
Booth et al. \cite{KS_BoothFermeKoniecznyPerez2012} proposed to use faithful assignments to capture the class of credibility-limited revision operators.
In the following, we present an extended version of their assignments, which are meant to capture extended credibility-limited revision operators.

\begin{definition}[(Extended) Credibility-Limited Assignment]\label{def:clr-assignment}
    Let \( \epistemicSpace=\tuple{\setAllES,\beliefSymb} \) be an epistemic space.
	A function $ \Psi \mapsto (\preceq_\Psi,C_\Psi,b_\Psi) $ is called an (extended) \emph{credibility-limited assignment for \( \epistemicSpace \)} if $ C_\Psi \subseteq \Omega $ is a set of interpretations with $ \modelsOf{\Psi} \subseteq C_\Psi  $, and $ \preceq_\Psi$ is a total preorder over $  C_\Psi $, and \( b_\Psi \in \{\top,\bot\}   \)  for all \( \Psi\in \setAllES \) such that the following~holds:
	\begin{description}
		\item[\normalfont(\textlabel{CLA\( {}_\bot \)}{pstl:CLAbot})] If \( b_\Psi = \bot \), then \( C_\Psi=\Omega \).
	\end{description}
	\end{definition}	
	(Extended) credibility-limited assignments carry two kinds of information. 
	First, \( C_\Psi\) describes semantically all consistent beliefs denoted as credible and \( b_\Psi  \) represents whether an inconsistent formula is considered as credible or not. 
	Note that \( b_\Psi \) is an extension to the assignments considered by Booth et al. \cite{KS_BoothFermeKoniecznyPerez2012}. 
	Second, the total preorder \( \preceq_{\Psi} \) serves the same purpose as in Katsuno-Mendelzon characterzation of revision \cite{KS_KatsunoMendelzon1992}; representing the preferences of the agent. 
	Note that \( \preceq_{\Psi} \) might be a relation over a strict subset of \( \Omega \). 

	\begin{definition}\label{def:clr-assignment_ff} 
        Let \( \epistemicSpace=\tuple{\setAllES,\beliefSymb} \) be an epistemic space.
		A credibility-limited assignment $ \Psi \mapsto (\preceq_\Psi,C_\Psi,b_\Psi) $ for \( \epistemicSpace \) is called \emph{faithful} if the following holds:
		\begin{description}
			\item[\normalfont(\textlabel{CLFA1}{pstl:CLFA1})] \( \text{If } \omega_1 \in \modelsOf{\Psi} \ksAND \omega_2 \in \modelsOf{\Psi} \ksTHEN \omega_1 \simeq_\Psi \omega_2 \)
			\item[\normalfont(\textlabel{CLFA2}{pstl:CLFA2})] \( \text{If } \omega_1 \in \modelsOf{\Psi} \ksAND \omega_2 \notin\modelsOf{\Psi} \ksTHEN \omega_1 <_\Psi \omega_2 \)
		\end{description}
	\end{definition}	
	We connect credibility-limited assignments with belief change operators by the following notion of compatibility~\cite{KS_FalakhRudolphSauerwald2022}.
	\begin{definition}\label{def:clrevision-compatible}
		A credibility-limited assignment \( \Psi \mapsto (\preceq_\Psi,C_\Psi,b_\Psi) \) is called \emph{(credibility-limited) revision-compatible} with a belief change operator \( \clRevision \) if the following holds:
		\begin{equation}
			\tag{revision-compatible}\label{eq:cl_revision}
			\modelsOf{\Psi \clRevision \alpha} = \begin{cases}
				\min(\modelsOf{\alpha},\preceq_\Psi) \!\!\! & \ksIF \modelsOf{\alpha} \cap C_\Psi \neq \emptyset \\
				\emptyset & \ksIF \modelsOf{\alpha}=\emptyset \ksAND b_\Psi=\bot\\
				\modelsOf{\Psi} & \ksOtherwise 
			\end{cases}
		\end{equation}
	\end{definition}	
	Given the notion of \hyperref[eq:cl_revision]{revision-compatibility}, we will now show that faithful credibility-limited assignments fully capture extended credible-limited revision operators for epistemic states. 
	\begin{theorem}\label{thm:credibilitylimited_characterization}
        Let \( \epistemicSpace=\tuple{\setAllES,\beliefSymb} \) be an epistemic space and let  $ \clRevision $ be a belief change operator for \( \epistemicSpace \).
		Then \( \clRevision \) is an extended credibility-limited revision operator for \( \epistemicSpace \) if and only if there is a faithful credibility-limited assignment $ \Psi \mapsto (\preceq_\Psi,C_\Psi, b_{\Psi}) $ that is \ref{eq:cl_revision} with \( \clRevision \).
	\end{theorem}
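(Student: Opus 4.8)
The plan is to prove the two directions of the biconditional separately, following the template of the Katsuno--Mendelzon representation theorem and of the characterization of credibility-limited revision by Booth et al.~\cite{KS_BoothFermeKoniecznyPerez2012}, with additional bookkeeping for the cases in which \( \beliefsOf{\Psi} \) or \( \alpha \) is inconsistent.

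For the ``if'' direction, assume a faithful credibility-limited assignment \( \Psi\mapsto(\preceq_\Psi,C_\Psi,b_\Psi) \) that is \ref{eq:cl_revision} with \( \clRevision \) and check \eqref{pstl:GCLR1}--\eqref{pstl:GCLR7} by a case split on the three branches of \eqref{eq:cl_revision}. Postulate \eqref{pstl:GCLR5} is immediate since the right-hand side of \eqref{eq:cl_revision} depends on \( \alpha \) only through \( \modelsOf{\alpha} \). \eqref{pstl:GCLR1} is read off the three branches (in the middle branch \( \alpha \) is a contradiction, so \( \beliefsOf{\Psi\clRevision\alpha}=\Cn(\bot)\ni\alpha \)); \eqref{pstl:GCLR2} uses that consistency of \( \beliefsOf{\Psi}+\alpha \) forces the first branch and that, by \eqref{pstl:CLFA1}--\eqref{pstl:CLFA2}, \( \modelsOf{\Psi} \) is the \( \preceq_\Psi \)-minimum of \( C_\Psi \), so \( \min(\modelsOf{\alpha},\preceq_\Psi)=\modelsOf{\Psi}\cap\modelsOf{\alpha} \). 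For \eqref{pstl:GCLR3} and \eqref{pstl:GCLR4} the key points are that over a finite \( \Omega \) the first branch always returns a nonempty set, the middle branch fires only for inconsistent \( \alpha \), and the third branch returns \( \modelsOf{\Psi} \) while forcing \( b_\Psi=\top \) (by \eqref{pstl:CLAbot}) and \( \modelsOf{\alpha}\cap C_\Psi=\emptyset \); for \eqref{pstl:GCLR4} one additionally uses that \( \alpha\models\beta \) implies \( \modelsOf{\alpha}\cap C_\Psi\neq\emptyset\Rightarrow\modelsOf{\beta}\cap C_\Psi\neq\emptyset \). \eqref{pstl:GCLR6} is the most delicate: if \( \alpha\in\beliefsOf{\Psi\clRevision\alpha} \), then either \( \alpha \) is realized in the first or middle branch, in which case \( \alpha\models\beta \) keeps \( \beta \) out of the third branch and \( \beta\in\beliefsOf{\Psi\clRevision\beta} \) too; or we are in the third branch with \( \modelsOf{\Psi}\subseteq\modelsOf{\alpha} \), but then \( \modelsOf{\alpha}\cap C_\Psi=\emptyset \) together with \( \modelsOf{\Psi}\subseteq C_\Psi \) forces \( \modelsOf{\Psi}=\emptyset \), whence \( \modelsOf{\Psi\clRevision\beta}=\emptyset \) in the third branch for \( \beta \) as well, so \( \beta\in\beliefsOf{\Psi\clRevision\beta} \). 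Finally \eqref{pstl:GCLR7} is the classical disjunction argument: with \( G=\modelsOf{\alpha}\cup\modelsOf{\beta} \) one splits on whether \( G\cap C_\Psi=\emptyset \) and, if not, on which of \( \modelsOf{\alpha}\cap C_\Psi,\ \modelsOf{\beta}\cap C_\Psi \) attains the \( \preceq_\Psi \)-minimum, using \( \modelsOf{X\cap Y}=\modelsOf{X}\cup\modelsOf{Y} \) for deductively closed \( X,Y \).

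For the ``only if'' direction, given that \( \clRevision \) satisfies \eqref{pstl:GCLR1}--\eqref{pstl:GCLR7}, I would fix for each nonempty \( M\subseteq\Omega \) a formula \( \varphi_M \) with \( \modelsOf{\varphi_M}=M \) (possible since \( \Sigma \) is finite), write \( \varphi_\omega=\varphi_{\{\omega\}} \) and \( \varphi_{\omega\omega'}=\varphi_{\{\omega,\omega'\}} \), and define \( C_\Psi=\{\omega\in\Omega\mid\omega\in\modelsOf{\Psi\clRevision\varphi_\omega}\} \); \( \omega\preceq_\Psi\omega' \) iff \( \omega\in\modelsOf{\Psi\clRevision\varphi_{\omega\omega'}} \) for \( \omega,\omega'\in C_\Psi \); and \( b_\Psi=\bot \) exactly when \( \modelsOf{\Psi\clRevision\bot}=\emptyset \) and \( C_\Psi=\Omega \), and \( b_\Psi=\top \) otherwise. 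Then \( \modelsOf{\Psi}\subseteq C_\Psi \) by \eqref{pstl:GCLR2} (for \( \omega\in\modelsOf{\Psi} \), \( \beliefsOf{\Psi}+\varphi_\omega \) is consistent, so \( \modelsOf{\Psi\clRevision\varphi_\omega}=\{\omega\} \)); \eqref{pstl:CLAbot} holds by the choice of \( b_\Psi \); reflexivity and totality of \( \preceq_\Psi \) on \( C_\Psi \) follow from the definition of \( C_\Psi \) with \eqref{pstl:GCLR5} and \eqref{pstl:GCLR7} (the latter forces \( \beliefsOf{\Psi\clRevision\varphi_{\omega\omega'}} \) to be one of \( \beliefsOf{\Psi\clRevision\varphi_\omega} \), \( \beliefsOf{\Psi\clRevision\varphi_{\omega'}} \), or their intersection, whose model set in each case contains \( \omega \) or \( \omega' \)); transitivity is the standard Katsuno--Mendelzon argument applying \eqref{pstl:GCLR7} to \( \varphi_{\{\omega,\omega',\omega''\}} \); and faithfulness \eqref{pstl:CLFA1}--\eqref{pstl:CLFA2} follows directly from \eqref{pstl:GCLR2} applied to \( \varphi_{\omega\omega'} \). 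It then remains to verify the three-case identity \eqref{eq:cl_revision}: for \( \modelsOf{\alpha}\cap C_\Psi\neq\emptyset \) one first shows \( \alpha\in\beliefsOf{\Psi\clRevision\alpha} \) (from \eqref{pstl:GCLR6} using a credible witness world of \( \alpha \), or from \eqref{pstl:GCLR2} if that witness lies in \( \modelsOf{\Psi} \)), then \( \modelsOf{\Psi\clRevision\alpha}\subseteq\modelsOf{\alpha}\cap C_\Psi \) by a cautious-monotony lemma (any world of a consistent successful revision outcome lies in \( C_\Psi \), obtained by applying \eqref{pstl:GCLR7} to split off single models), and then \( \modelsOf{\Psi\clRevision\alpha}=\min(\modelsOf{\alpha}\cap C_\Psi,\preceq_\Psi) \) by the standard disjunction argument; for \( \modelsOf{\alpha}=\emptyset \) one reduces to \( \bot \) via \eqref{pstl:GCLR5} and reads the outcome off \eqref{pstl:GCLR1} and the choice of \( b_\Psi \); and for consistent \( \alpha \) with \( \modelsOf{\alpha}\cap C_\Psi=\emptyset \), \eqref{pstl:GCLR1} gives \( \beliefsOf{\Psi\clRevision\alpha}=\beliefsOf{\Psi} \) once the successful branch is excluded, which \( \modelsOf{\alpha}\cap C_\Psi=\emptyset \) does via the same lemma.

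The hard part, I expect, lies entirely in the ``only if'' direction and has two interlocking aspects. First, the Katsuno--Mendelzon compatibility and transitivity arguments must be rerun so that they only ever invoke worlds inside \( C_\Psi \) and survive the possibility that \( \beliefsOf{\Psi} \) is inconsistent (so that \( \modelsOf{\Psi}=\emptyset \) need not be the \( \preceq_\Psi \)-minimum); since \eqref{pstl:CLR3} is no longer available, revision outcomes need not be consistent, and it is \eqref{pstl:GCLR4} and \eqref{pstl:GCLR6} that make the cautious-monotony lemma go through. Second, making \( b_\Psi \) compatible with \eqref{pstl:CLAbot} requires showing that \( \modelsOf{\Psi\clRevision\bot}=\emptyset \) together with consistency of \( \beliefsOf{\Psi} \) forces \( C_\Psi=\Omega \): this is where \eqref{pstl:GCLR6}, applied to \( \bot\models\varphi_\omega \), shows that every \( \varphi_\omega \)-revision succeeds, and \eqref{pstl:GCLR3} then upgrades each such outcome to \( \{\omega\} \), so that every world is credible.
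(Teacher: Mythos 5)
Your proposal is correct and follows essentially the same route as the paper's proof: the same canonical construction of $C_\Psi$ from singleton revisions, $\preceq_\Psi$ from two-world revisions, and a flag $b_\Psi$ read off $\Psi\clRevision\bot$ (your definitions of $C_\Psi$ and $b_\Psi$ are equivalent to the paper's via \eqref{pstl:GCLR1}/\eqref{pstl:GCLR2} and the \eqref{pstl:CLAbot} argument), together with the same postulate-by-postulate case analysis on the three branches of \eqref{eq:cl_revision} for the converse. You also correctly isolate the two genuinely new difficulties relative to Booth et al. --- that \eqref{pstl:GCLR6} plus \eqref{pstl:GCLR3} force $C_\Psi=\Omega$ when $\modelsOf{\Psi}\neq\emptyset$ and $\bot\in\beliefsOf{\Psi\clRevision\bot}$, and that successful consistent outcomes consist of credible worlds --- and resolve them as the paper does.
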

\begin{proof}[Proof (idea)]
    Overall, the proofs follows a similar structure as the proof for the semantic characterization of (non-extended) credibility-limited revision by Booth et al. \cite{KS_BoothFermeKoniecznyPerez2012}.
    Their proof is conceptually extended by dealing with inconsistency and adapted to deal with the two different postulates \eqref{pstl:CLR3w} and \eqref{pstl:CLR3wcp}. 
    For the \( \Rightarrow \)-direction, one has to give a construction of an faithful credibility-limited assignment $ \Psi \mapsto (\preceq_\Psi,C_\Psi, b_{\Psi}) $ that is \ref{eq:cl_revision} with \( \clRevision \).
    The construction used in the full proof works as follows. We set \( C_{\Psi} \) as follows
    \begin{align*}
        C_{\Psi}   & = \{  \omega \mid \modelsOf{\varphi_{\omega}} = \modelsOf{\Psi\clRevision\varphi_{\omega}} \}  ,  \tag{{see \text{\cite[Remark 1]{KS_BoothFermeKoniecznyPerez2012}}}}
    \end{align*}
    for each \( \Psi\in\setAllES \), where \( \varphi_\omega \) denotes a formula with \( \modelsOf{\varphi_\omega}=\{\omega\} \).
    If \( \modelsOf{\Psi}\neq\emptyset \) and  \( \bot\in\beliefsOf{\Psi\clRevision\bot} \), then we  set \( b_{\Psi}=\bot \); otherwise we set \( b_\Psi=\top \).
    For each \( \Psi\in\setAllES \) let \( {\preceq_{\Psi}}\subseteq C_\Psi\times C_\Psi \)   be the relation such that
    \begin{align*}
        \omega_1 \preceq_{\Psi} \omega_2 &   \text{ if and only if }  \omega_1\in \modelsOf{\Psi\clRevision\varphi_{\omega_1,\omega_2}}
    \end{align*}
    holds, where \( \varphi_{\omega_1,\omega_2} \) denotes a formula with \( \modelsOf{\varphi_{\omega_1,\omega_2}}=\{\omega_1,\omega_2\} \).
\end{proof}

\section{Example and Properties}
\label{sec:examples_properties}
	In the following, we consider an example for an extended credibility-limited revision operator and demonstrate the semantic characterization by Theorem~\ref{thm:credibilitylimited_characterization}.

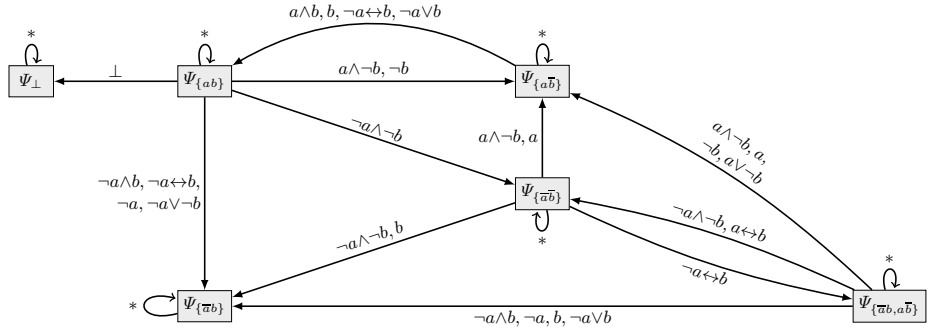
\begin{figure*}[t]
	\centering
	\resizebox{\textwidth}{!}{\begin{tikzpicture}
			\tikzstyle{esnode}=[draw,fill=gray!15,text=black,minimum width=2.5em,minimum height=1.75em]
			\tikzstyle{myedge}=[->,thick,-{latex}]

			\tikzstyle{myedgenode}=[inner sep=0.4ex]
			
			\node (psiB) [esnode,anchor=west] at (0,0) {\(\Psi_{\bot}\)};
			
			\node (psiAB) [esnode,anchor=west] at (3,0) {\(\Psi_{\{{a}{b}\}}\)};
			\node (psiAnB) [esnode,anchor=west] at (9,0) {\(\Psi_{\{{a}\nAtom{b}\}}\)};
			\node (psinAB) [esnode,anchor=west] at (3,-4) {\(\Psi_{\{\nAtom{a}{b}\}}\)};
			\node (psinAnB) [esnode,anchor=west] at (9,-2) {\(\Psi_{\{\nAtom{a}\nAtom{b}\}}\)};
			\node (psinAnBnAB) [esnode,anchor=west] at (15,-4) {\(\Psi_{\{\nAtom{a}{b},{a}\nAtom{b}\}}\)};
			
			\draw (psiAB) edge [myedge] node [myedgenode,above,sloped] {$a{\land}\neg b,\neg b$}  (psiAnB);
			\draw (psiAB) edge [myedge] node [myedgenode,left,text width=3cm,align=right] {$\neg a{\land} b, \neg a {\leftrightarrow} b,\linebreak\neg a,\neg a {\lor} \neg b$}  (psinAB);
			\draw (psiAB) edge [myedge] node [myedgenode,above,sloped] {$\neg a{\land}\neg b$}  (psinAnB);
			
			\draw (psinAnB) edge [myedge] node [myedgenode,above,sloped] {$\neg a{\land} \neg b,b$}  (psinAB);
			\draw (psinAnB) edge [myedge] node [myedgenode,left] {$a{\land} \neg b,a$}  (psiAnB);
			\draw (psinAnB) edge [myedge,bend right=7] node [myedgenode,below,sloped] {$\neg a {\leftrightarrow} b$}  (psinAnBnAB);
			
			\draw (psiAnB) edge [myedge,bend right] node [myedgenode,above,sloped,text width=3cm] {$a{\land} b, b,\neg a{\leftrightarrow} b, \neg a {\lor} b$}  (psiAB);
			
			\draw (psinAnBnAB) edge [myedge,bend right=7] node [myedgenode,above,sloped] {$\neg a {\land}\neg b,a{\leftrightarrow}b$}  (psinAnB);
			\draw (psinAnBnAB) edge [myedge,bend left=0] node [myedgenode,below,sloped] {$\neg a{{\land}} b,\neg a,b,\neg a {\lor}b $}  (psinAB);
			\draw (psinAnBnAB) edge [myedge,bend right=10] node [myedgenode,above,sloped,text width=1.5cm] {$a{\land}\neg b,a,\linebreak\neg b,a{\lor}\neg b$}  (psiAnB);
			
			\draw (psiAB) edge [myedge] node [myedgenode,above] {$\bot$}  (psiB);

			\draw (psiB) edge [loop above,thick] node {$*$}  (psiB) ;
			\draw (psiAB) edge [loop above,thick] node {$*$}  (psiAB) ;
			\draw (psiAnB) edge [loop above,thick] node {$*$}  (psiAnB) ;
			\draw (psinAB) edge [loop left,thick] node {$*$}  (psinAB) ;
			\draw (psinAnB) edge [loop below,thick] node {$*$}  (psinAnB) ;
			\draw (psinAnBnAB) edge [loop above,thick] node {$*$}  (psinAnBnAB) ;
	\end{tikzpicture}}
	\caption{Graphical representation of the extended credibility-limited revision operator \( \clRevision \) given in Example \ref{ex:genclr}.}\label{fig:ex:genclr}
\end{figure*}
\begin{example}\label{ex:genclr}
    Let \( \Sigma=\{a,b\} \) and let \( \epistemicSpace_\mathrm{ex}=\tuple{\setAllES,\beliefSymb} \) be the epistemic space where \( \setAllES=\{ \allowbreak\Psi_\bot,\allowbreak\Psi_{\{ab\}},\allowbreak\Psi_{\{\nAtom{a}b\}},\allowbreak\Psi_{\{a\nAtom{b}\}},\Psi_{\{\nAtom{a}\nAtom{b}\}}, \Psi_{\{\nAtom{a}b,a\nAtom{b}\}} \} \) is a set of epistemic states with:
    \begin{align*}
    	\modelsOf{\Psi_\bot}                        & = \emptyset                   & \modelsOf{\Psi_{\{\nAtom{a}b,a\nAtom{b}\}}} & =\{ \nAtom{a}b, a\nAtom{b} \} \\
    	\modelsOf{\Psi_{\{ab\}}}         & = \{ ab \}         & \modelsOf{\Psi_{\{a\nAtom{b}\}}}         & = \{ a\nAtom{b}\}          \\
    	 \modelsOf{\Psi_{\{\nAtom{a}b\}}} & = \{ \nAtom{a}b \} & \modelsOf{\Psi_{\{\nAtom{a}\nAtom{b}\}}} & = \{ \nAtom{a}\nAtom{b} \}
    \end{align*}
    In the following, we obtain an extended credibility-limited revision operator \( \clRevision \) for \( \epistemicSpace_\mathrm{ex} \) by specifying a faithful credibility-limited assignment that is \ref{eq:cl_revision} with \( \circ \).
    We use the following linear order \( \ll \) on \( \Omega \):
    \begin{equation*}
        ab \ \ \ll\ \ \nAtom{a}b \ \ \ll\ \  a\nAtom{b} \ \ \ll\ \  \nAtom{a}\nAtom{b}
    \end{equation*}
    We specify \( \Psi \mapsto (\preceq_\Psi,C_\Psi, b_{\Psi})  \) stepwise. We start by providing \( C_\Psi \) for each \( \Psi\in\setAllES \), which encodes semantically the set of those formulas that are considered as credible:
    \begin{align*}
        C_{\Psi_\bot}             & = \emptyset        & C_{\Psi_{\{\nAtom{a}b,a\nAtom{b}\}}} & =\{ \nAtom{a}b, a\nAtom{b},\nAtom{a}\nAtom{b} \}  \\
        C_{\Psi_{\{ab\}}}         & = \Omega           & C_{\Psi_{\{a\nAtom{b}\}}}            & = \{ ab, a\nAtom{b}\}                             \\
        C_{\Psi_{\{\nAtom{a}b\}}} & = \{ \nAtom{a}b \} & C_{\Psi_{\{\nAtom{a}\nAtom{b}\}}}    & = \{ \nAtom{a}b ,a\nAtom{b},\nAtom{a}\nAtom{a} \}
    \end{align*}
    We set \( b_\Psi=\top \) for each \( \Psi\in\setAllES\setminus\{ \Psi_{\{ab\}} \} \), and set \( b_{\Psi_{\{ab\}}}=\bot \). Meaning
    For each \( \Psi\in\setAllES\setminus\{ \Psi_{\{\nAtom{a}\nAtom{b}\}} \} \) we set \( {\preceq_{\Psi}} \subseteq (C_\Psi\times C_\Psi) \):
    \begin{equation*}
        {\preceq_{\Psi}} = \left(({\ll}\cap (C_\Psi\times C_\Psi)) \setminus (C_\Psi\times \modelsOf{\Psi})\right) \cup (\modelsOf{\Psi}\times C_\Psi),
    \end{equation*}
    i.e., \( {\preceq_{\Psi}} \) is the total preorder on \( C_\Psi \) such that \( {\minOf{C_\Psi}{\preceq_{\Psi}}}=\modelsOf{\Psi} \) and the remaining elements in \( {C_\Psi\setminus\modelsOf{\Psi}} \) are ordered according to \( {\ll} \).
    For \( \Psi_{\{\nAtom{a}\nAtom{b}\}} \), we specify \( {\preceq_{\Psi_{\{\nAtom{a}\nAtom{b}\}}}} \subseteq (C_{\Psi_{\{\nAtom{a}\nAtom{b}\}}}\times C_{\Psi_{\{\nAtom{a}\nAtom{b}\}}}) \) as follows:
    \begin{align*}
        \nAtom{a}\nAtom{b} & \preceq_{\Psi_{\{\nAtom{a}\nAtom{b}\}}}  \nAtom{a}{b}       & \nAtom{a}{b} & \preceq_{\Psi_{\{\nAtom{a}\nAtom{b}\}}} \nAtom{a}{b} & {a}\nAtom{b} & \preceq_{\Psi_{\{\nAtom{a}\nAtom{b}\}}} \nAtom{a}{b} \\
        \nAtom{a}\nAtom{b} & \preceq_{\Psi_{\{\nAtom{a}\nAtom{b}\}}}  {a}\nAtom{b}       & \nAtom{a}{b} & \preceq_{\Psi_{\{\nAtom{a}\nAtom{b}\}}} {a}\nAtom{b} & {a}\nAtom{b} & \preceq_{\Psi_{\{\nAtom{a}\nAtom{b}\}}} {a}\nAtom{b} \\
        \nAtom{a}\nAtom{b} & \preceq_{\Psi_{\{\nAtom{a}\nAtom{b}\}}}  \nAtom{a}\nAtom{b} &           &           &              &
    \end{align*}
    Because \eqref{pstl:CLAbot}, \eqref{pstl:CLFA1}, and \eqref{pstl:CLFA2} are satisfied, \( \Psi \mapsto (\preceq_\Psi,C_\Psi, b_{\Psi})  \) is a faithful credibility-limited assignment.
    A belief change operator \( \clRevision \) for \( \epistemicSpace_\mathrm{ex} \) that is \ref{eq:cl_revision} with \( \Psi \mapsto (\preceq_\Psi,C_\Psi, b_{\Psi})  \) is then:
    \begin{equation*}
        \Psi \clRevision \alpha = \begin{cases}
            \Psi_{\modelsOf{\Psi}\cap \modelsOf{\alpha}} & \ksIF \modelsOf{\Psi} \cap \modelsOf{\alpha} \neq \emptyset                                                    \\
            \Psi_\bot                                    & \ksIF \modelsOf{\Psi} = \emptyset \ksAND \Psi = \Psi_{\{ab\}}                                                               \\
            \Psi_{\{ \nAtom{a}{b},{a}\nAtom{b} \}}       & \ksIF \{ \nAtom{a}{b},{a}\nAtom{b} \}\subseteq  \modelsOf{\alpha} \ksAND \Psi=\Psi_{\{\nAtom{a}\nAtom{b}\}} \\
            \Psi_{\minOf{\modelsOf{\alpha}}{\ll}}        & \ksOtherwise
        \end{cases}
    \end{equation*}    
    By Theorem~\ref{thm:credibilitylimited_characterization}, we obtain that \( \clRevision \) is an extended credibility-limited revision operator.
    A graphical representation of this operator is given in Figure \ref{fig:ex:genclr}.
\end{example}

Note that \( \clRevision \) in Example \ref{ex:genclr} has properties that AGM revision operators do not have.
The beliefs accepted for revision are not the full language \( \propLang \).
The selection of beliefs accepted for revision is done individually for each epistemic state. 
Inconsistent beliefs are only accepted for revision in selected epistemic states.
Moreover, \( \clRevision \) in Example \ref{ex:genclr} demonstrates that in contrast to the credibility-limited revision operators considered by Booth et al. \cite{KS_BoothFermeKoniecznyPerez2012} (cf. Section~\ref{sec:clr_subsec}), extended credibility-limited revision operators, as defined in Definition~\ref{def:clr},  are able to deal with inconsistent input and with inconsistent epistemic states, and therefore make use of the full unrestricted framework of belief change for epistemic spaces, as introduced in Section~\ref{sec:es_subsec}. 

The following proposition points out that our generalization approach is successful in the sense that every AGM revision operator for epistemic states is indeed an extended credibility-limited revision operator in the sense of Definition~\ref{def:clr}.
\begin{proposition}\label{prop:es_agm_is_ecl}
    Let \( \epistemicSpace=\tuple{\setAllES,\beliefSymb} \) be an epistemic space and let \( \revision \) be a belief change operator for \( \epistemicSpace \). %
    The operator \( \revision \) is an AGM revision operator for \( \epistemicSpace \) if and only if \( \revision \) is an extended credibility-limited revision operator for \( \epistemicSpace \) which is revision-compatible with some faithful credibility-limited assignment \( \Psi \mapsto (\preceq_{\Psi},C_{\Psi}, b_{\Psi}) \) where \( C_{\Psi}=\Omega \) and \( b_{\Psi}=\bot \) holds for each \( \Psi\in\setAllES \).
\end{proposition}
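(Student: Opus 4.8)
The plan is to prove the two implications separately. Throughout I would use that over the finite signature $\Sigma$ every deductively closed set $X$ satisfies $X=\Th(\modelsOf{X})$, so in particular $\beliefsOf{\Psi\revision\alpha}=\Th(\modelsOf{\Psi\revision\alpha})$ and $\beliefsOf{\Psi}+\alpha=\Th(\modelsOf{\Psi}\cap\modelsOf{\alpha})$; hence everything can be argued at the level of model sets, and $\Omega$ being finite guarantees minimal elements under any total preorder on a nonempty subset.

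For the ``if'' direction I would assume $\revision$ is \ref{eq:cl_revision} with a faithful credibility-limited assignment $\Psi\mapsto(\preceq_\Psi,C_\Psi,b_\Psi)$ such that $C_\Psi=\Omega$ and $b_\Psi=\bot$ for all $\Psi$. The first step is to observe that under these constraints the cases in \ref{eq:cl_revision} collapse: $\modelsOf{\alpha}\cap C_\Psi\neq\emptyset$ becomes $\modelsOf{\alpha}\neq\emptyset$ and the ``otherwise'' branch is never reached, so $\modelsOf{\Psi\revision\alpha}=\min(\modelsOf{\alpha},\preceq_\Psi)$ for all $\Psi,\alpha$ (with the convention $\min(\emptyset,\preceq_\Psi)=\emptyset$). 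From this Katsuno--Mendelzon-style presentation I would then verify \eqref{pstl:R1}--\eqref{pstl:R6} by the usual routine arguments: \eqref{pstl:R1} from $\min(\modelsOf{\alpha},\preceq_\Psi)\subseteq\modelsOf{\alpha}$, \eqref{pstl:R3} from finiteness of $\Omega$, \eqref{pstl:R4} from $\alpha\equiv\beta\Rightarrow\modelsOf{\alpha}=\modelsOf{\beta}$, \eqref{pstl:R2} by using \eqref{pstl:CLFA1}--\eqref{pstl:CLFA2} to get $\modelsOf{\Psi}=\min(\Omega,\preceq_\Psi)$ whenever $\modelsOf{\Psi}\neq\emptyset$ and hence $\min(\modelsOf{\alpha},\preceq_\Psi)=\modelsOf{\alpha}\cap\modelsOf{\Psi}$, and \eqref{pstl:R5}--\eqref{pstl:R6} from the elementary facts $\min(A,{\preceq})\cap B\subseteq\min(A\cap B,{\preceq})$ and, when $\min(A,{\preceq})\cap B\neq\emptyset$, $\min(A\cap B,{\preceq})\subseteq\min(A,{\preceq})\cap B$, applied with $A=\modelsOf{\alpha}$, $B=\modelsOf{\beta}$.

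For the ``only if'' direction I would assume $\revision$ is an AGM revision operator and proceed in two stages. Stage one: check that $\revision$ satisfies \eqref{pstl:GCLR1}--\eqref{pstl:GCLR7}. Most of this is immediate --- \eqref{pstl:GCLR1} and \eqref{pstl:GCLR6} from \eqref{pstl:R1}, \eqref{pstl:GCLR2}$\,=\,$\eqref{pstl:R2}, \eqref{pstl:GCLR5}$\,=\,$\eqref{pstl:R4}, and \eqref{pstl:GCLR3}, \eqref{pstl:GCLR4} from \eqref{pstl:R3} together with the observation that \eqref{pstl:R1} forces $\alpha$ (and hence any $\beta$ with $\alpha\models\beta$) to be consistent whenever $\beliefsOf{\Psi\revision\alpha}$ is --- while \eqref{pstl:GCLR7} is the well-known disjunctive factoring property of AGM revision. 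Then Theorem~\ref{thm:credibilitylimited_characterization} provides a faithful credibility-limited assignment that is \ref{eq:cl_revision} with $\revision$. Stage two: show this assignment can be taken with $C_\Psi=\Omega$ and $b_\Psi=\bot$. For $C_\Psi$, I would evaluate \ref{eq:cl_revision} at a formula $\varphi_\omega$ with $\modelsOf{\varphi_\omega}=\{\omega\}$: by \eqref{pstl:R1} and \eqref{pstl:R3} we have $\modelsOf{\Psi\revision\varphi_\omega}=\{\omega\}$, so $\omega\notin C_\Psi$ would force $\modelsOf{\Psi}=\{\omega\}$ via \ref{eq:cl_revision}, contradicting $\modelsOf{\Psi}\subseteq C_\Psi$; hence $C_\Psi=\Omega$. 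For $b_\Psi$, evaluating \ref{eq:cl_revision} at $\bot$ shows $b_\Psi=\bot$ is forced when $\modelsOf{\Psi}\neq\emptyset$ (since $\modelsOf{\Psi\revision\bot}=\emptyset$ by \eqref{pstl:R1}), and when $\modelsOf{\Psi}=\emptyset$ one may simply reset $b_\Psi$ to $\bot$, which changes no value of \ref{eq:cl_revision} (the only affected case returns $\emptyset=\modelsOf{\Psi}$ either way) and preserves \eqref{pstl:CLAbot} and the faithfulness conditions.

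I expect the main obstacle to be the bookkeeping of stage two of the ``only if'' direction --- establishing, with a careful case distinction on whether $\modelsOf{\Psi}$ is empty, that the parameters of the assignment supplied by Theorem~\ref{thm:credibilitylimited_characterization} are forced (or can be normalized) to $C_\Psi=\Omega$ and $b_\Psi=\bot$ --- together with correctly invoking disjunctive factoring for \eqref{pstl:GCLR7}. The ``if'' direction is essentially the classical Katsuno--Mendelzon verification specialized to $C_\Psi=\Omega$, $b_\Psi=\bot$ and should present no difficulty.
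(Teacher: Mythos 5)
Your proposal is correct, but it is organized quite differently from the paper's proof, in both directions. For the direction from AGM revision to the characterization, the paper does not verify the postulates \postulateRef{GCLR1}--\postulateRef{GCLR7} at all: it directly writes down the assignment ($C_\Psi=\Omega$, $b_\Psi=\bot$, and $\omega_1\preceq_\Psi\omega_2$ iff $\omega_1\in\modelsOf{\Psi\revision\varphi_{\omega_1,\omega_2}}$) and cites the Darwiche--Pearl representation theorem for faithfulness and compatibility, whereas you first establish the ECL postulates axiomatically (via \postulateRef{R1}--\postulateRef{R6} plus disjunctive factoring), then invoke Theorem~\ref{thm:credibilitylimited_characterization} for existence of \emph{some} assignment, and finally prove that its parameters are forced to $C_\Psi=\Omega$ and (after a harmless normalization when $\modelsOf{\Psi}=\emptyset$) $b_\Psi=\bot$. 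Your route costs an extra bookkeeping stage but buys a genuinely stronger piece of information, namely that the parameters of \emph{any} compatible faithful assignment are essentially determined, rather than merely that a suitable choice exists. For the converse direction the paper argues axiomatically, matching \postulateRef{R2} and \postulateRef{R4} with ECL postulates, reading \postulateRef{R1} and \postulateRef{R3} off the compatibility condition, and citing G\"ardenfors's equivalence of the trichotomy postulate with \postulateRef{R5}--\postulateRef{R6} under \postulateRef{R1}--\postulateRef{R4}; you instead collapse \ref{eq:cl_revision} to $\modelsOf{\Psi\revision\alpha}=\min(\modelsOf{\alpha},\preceq_\Psi)$ and run the standard Katsuno--Mendelzon semantic verification, which is more self-contained (no appeal to the G\"ardenfors equivalence) at the price of redoing the elementary facts about minima. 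All the individual steps you sketch check out: the collapse of the case distinction under $C_\Psi=\Omega$, $b_\Psi=\bot$ is right, the forcing argument for $C_\Psi=\Omega$ via $\modelsOf{\Psi\revision\varphi_\omega}=\{\omega\}$ and $\modelsOf{\Psi}\subseteq C_\Psi$ is sound, and the $b_\Psi$ normalization indeed changes no value of \ref{eq:cl_revision} and preserves \postulateRef{CLAbot} since $C_\Psi=\Omega$ has already been established.
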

\begin{proof}[Proof (idea)]We consider both directions of the claim independently.
    \begin{itemize}%
        \item[]\hspace{-1.25em}{\( \Rightarrow \)} Suppose that \( \revision \) is an AGM revision operator for \( \epistemicSpace \).
    We use the credibility-limited assignment \( {\Psi \mapsto (\preceq_{\Psi},C_{\Psi}, b_{\Psi})} \) given by:
    \begin{align*}
       C_{\Psi} & = \Omega & b_{\Psi} & = \bot &  \omega_1 \preceq_{\Psi} \omega_2 & \text{ if } \omega_1\in \modelsOf{\Psi\revision\varphi_{\omega_1,\omega_2}}  
    \end{align*}    
    The proof by Darwiche and Pearl \cite[Thm. 9]{KS_DarwichePearl1997} yields that \( {\Psi \mapsto (\preceq_{\Psi},C_{\Psi}, b_{\Psi})} \) is faithful and \ref{eq:cl_revision} with \( \clRevision \).

\item[]\hspace{-1.25em}{\( \Leftarrow \)}~Suppose that \( \revision \) is an extended credibility-limited revision operator for \( \epistemicSpace \) and \( \Psi \mapsto (\preceq_{\Psi},C_{\Psi}, b_{\Psi}) \) is as given above.
We obtain that \( \revision \) satisfies \eqref{pstl:R2} and \eqref{pstl:R4}, because \eqref{pstl:R2} coincides with \eqref{pstl:GCLR1} and \eqref{pstl:R4} coincides with \eqref{pstl:GCLR5}.
Because \( C_{\Psi}=\Omega \) and \( b_{\Psi}=\bot \) holds, by considering the \hyperref[eq:cl_revision]{revision-compatibility} one sees easily that \eqref{pstl:R1} and \eqref{pstl:R3} are satisfied by \( \revision \).
To see that \eqref{pstl:R5} and \eqref{pstl:R6} are satisfies by \( \revision \), use that \eqref{pstl:GCLR7} is equivalent to \eqref{pstl:R5} and \eqref{pstl:R6} whenever \eqref{pstl:R1}--\eqref{pstl:R4} are satisfied~\cite{KS_Gaerdenfors1988}.\qedhere
            \end{itemize}
\end{proof}

Next, we show that extended credibility-limited revision really extends credibility-limited revision for epistemic states as advertised in Section~\ref{sec:introduction}.
Therefore, we use Theorem~\ref{thm:credibilitylimited_characterization} to characterize operators that satisfy \eqref{pstl:CLR1}--\eqref{pstl:CLR6}, including \eqref{pstl:CLR3}, when there is no epistemic state with an inconsistent belief set (see \hyperref[pstl:globalconsistency]{global consistency}, defined on p.~\pageref{pstl:globalconsistency}). Note that this is close to the setting originally considered by Booth et al. \cite{KS_BoothFermeKoniecznyPerez2012}.
\begin{proposition}\label{prop:es_cl_is_ecl}
    Let \( \epistemicSpace=\tuple{\setAllES,\beliefSymb} \) be a \ref{pstl:globalconsistency} 
    epistemic space and let  $ \clRevision $ be a belief change operator for \( \epistemicSpace \). 
    The operator \( \clRevision \) satisfies \eqref{pstl:CLR1}--\eqref{pstl:CLR6} if and only if there is a faithful credibility-limited assignment $ \Psi \mapsto (\preceq_\Psi,C_\Psi, b_{\Psi}) $ that is \ref{eq:cl_revision} with \( \clRevision \) such that 
    \( b_\Psi=\top \) for each \( \Psi\in\setAllES \).
\end{proposition}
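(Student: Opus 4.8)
The plan is to read the statement off Theorem~\ref{thm:credibilitylimited_characterization}, which already characterizes extended credibility-limited revision operators by faithful credibility-limited assignments; the only work is to align \eqref{pstl:CLR1}--\eqref{pstl:CLR6} with \eqref{pstl:GCLR1}--\eqref{pstl:GCLR7} and to pin down the flag \( b_\Psi \). First I would note that \eqref{pstl:GCLR1}, \eqref{pstl:GCLR2}, \eqref{pstl:GCLR5}, \eqref{pstl:GCLR6}, \eqref{pstl:GCLR7} are, respectively, verbatim copies of \eqref{pstl:CLR1}, \eqref{pstl:CLR2}, \eqref{pstl:CLR4}, \eqref{pstl:CLR5}, \eqref{pstl:CLR6}, while \eqref{pstl:CLR3} implies both \eqref{pstl:CLR3wcp} (whose antecedent can then never hold) and \eqref{pstl:CLR3w} (whose consequent then always holds). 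Hence every operator satisfying \eqref{pstl:CLR1}--\eqref{pstl:CLR6} is in particular an extended credibility-limited revision operator, and the entire content of the proposition lies in the interplay between \eqref{pstl:CLR3} and the value of \( b_\Psi \) in \ref{eq:cl_revision}.

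For the \( \Rightarrow \)-direction I would assume \eqref{pstl:CLR1}--\eqref{pstl:CLR6}; by the first paragraph the operator satisfies \eqref{pstl:GCLR1}--\eqref{pstl:GCLR7}, so Theorem~\ref{thm:credibilitylimited_characterization} supplies a faithful credibility-limited assignment \( \Psi\mapsto(\preceq_\Psi,C_\Psi,b_\Psi) \) that is \ref{eq:cl_revision} with \( \clRevision \); it then remains only to show \( b_\Psi=\top \) for all \( \Psi \). Suppose \( b_{\Psi_0}=\bot \) for some \( \Psi_0\in\setAllES \). Instantiating \ref{eq:cl_revision} at \( \alpha=\bot \), the first clause is excluded because \( \modelsOf{\bot}\cap C_{\Psi_0}=\emptyset \), so the second clause applies and yields \( \modelsOf{\Psi_0\clRevision\bot}=\emptyset \), contradicting \eqref{pstl:CLR3}. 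For the \( \Leftarrow \)-direction I would take a faithful credibility-limited assignment with \( b_\Psi=\top \) for all \( \Psi \) that is \ref{eq:cl_revision} with \( \clRevision \); Theorem~\ref{thm:credibilitylimited_characterization} gives \eqref{pstl:GCLR1}--\eqref{pstl:GCLR7}, hence \eqref{pstl:CLR1}, \eqref{pstl:CLR2}, \eqref{pstl:CLR4}, \eqref{pstl:CLR5}, \eqref{pstl:CLR6}, and only \eqref{pstl:CLR3} is left to verify. Since \( b_\Psi=\top \) kills the middle clause of \ref{eq:cl_revision}, \( \modelsOf{\Psi\clRevision\alpha} \) is either \( \min(\modelsOf{\alpha},\preceq_\Psi) \)---non-empty because \( \Omega \) is finite, \( \preceq_\Psi \) is a total preorder on \( C_\Psi \), and that clause fires only when \( \modelsOf{\alpha}\cap C_\Psi\neq\emptyset \)---or \( \modelsOf{\Psi} \), which is non-empty because \( \epistemicSpace \) is \hyperref[pstl:globalconsistency]{globally consistent}; in both cases \( \beliefsOf{\Psi\clRevision\alpha} \) is consistent, so \eqref{pstl:CLR3} holds.

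I expect the only delicate point---more bookkeeping than obstacle---to be the \( \min \)-clause: since \( \preceq_\Psi \) lives only on \( C_\Psi \), one should read \( \min(\modelsOf{\alpha},\preceq_\Psi) \) as minimisation over \( \modelsOf{\alpha}\cap C_\Psi \), and its non-emptiness genuinely needs both finiteness of \( \Omega \) and the guard \( \modelsOf{\alpha}\cap C_\Psi\neq\emptyset \). It is also worth flagging that \hyperref[pstl:globalconsistency]{global consistency} enters asymmetrically: it is needed only in the \( \Leftarrow \)-direction to keep the ``otherwise'' branch of \ref{eq:cl_revision} consistent, whereas in the \( \Rightarrow \)-direction \eqref{pstl:CLR3} already forces global consistency by itself (cf.\ Proposition~\ref{prop:no_cl_inunrestricted}), so the hypothesis is present only to make the biconditional symmetric.
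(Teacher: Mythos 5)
Your proposal is correct and follows essentially the same route as the paper: reduce to Theorem~\ref{thm:credibilitylimited_characterization} via the postulate correspondence, force \( b_\Psi=\top \) in the \( \Rightarrow \)-direction by evaluating \ref{eq:cl_revision} at \( \bot \) against \eqref{pstl:CLR3}, and recover \eqref{pstl:CLR3} in the \( \Leftarrow \)-direction from non-emptiness of the \( \min \)-clause and \hyperref[pstl:globalconsistency]{global consistency}. Your added remarks (reading \( \min \) over \( \modelsOf{\alpha}\cap C_\Psi \), and the asymmetric role of global consistency) are accurate refinements of the paper's terser argument rather than deviations from it.
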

\begin{proof}We consider both directions independently.
    \begin{itemize}\setlength{\itemsep}{0pt}
\item[]\hspace{-1.25em}{\( \Rightarrow \)}
        If \( \clRevision \) satisfies \eqref{pstl:CLR1}--\eqref{pstl:CLR6}, then \eqref{pstl:GCLR1}--\eqref{pstl:GCLR7} are satisfied (as \eqref{pstl:GCLR3} and \eqref{pstl:GCLR4} are implied by \eqref{pstl:CLR3}).
        By Theorem~\ref{thm:credibilitylimited_characterization}, there exists some faithful credibility-limited assignment \( \Psi \mapsto (\preceq_{\Psi},C_{\Psi}, b_{\Psi}) \) that is \ref{eq:cl_revision} with \( \clRevision \). 
        Let \( \Psi\in\setAllES \) be an epistemic state. 
        From \eqref{pstl:CLR3} we obtain that \( \modelsOf{\Psi\clRevision\alpha}\neq\emptyset \) for each \( \alpha\in\propLang \).
        Consequently, we have \( b_\Psi=\top \), as otherwise we would obtain \( \modelsOf{\Psi\clRevision\bot}=\emptyset \) by \hyperref[eq:cl_revision]{revision-compatibility}.
\item[]\hspace{-1.25em}{\( \Leftarrow \)}
        Suppose there is a credibility-limited assignment $ \Psi \mapsto (\preceq_\Psi,C_\Psi, b_{\Psi}) $ that is \ref{eq:cl_revision} with \( \clRevision \) such that  \( b_\Psi=\top \) for each \( \Psi\in\setAllES \).
        By Theorem~\ref{thm:credibilitylimited_characterization}, we obtain that \( \clRevision \) satisfies \eqref{pstl:CLR1}, \eqref{pstl:CLR2}, and \eqref{pstl:CLR4}--\eqref{pstl:CLR6}. For satisfaction of \eqref{pstl:CLR3} observe that by \hyperref[eq:cl_revision]{revision-compatibility} we obtain \( \modelsOf{\Psi\clRevision\bot}\neq\emptyset \) due to \( b_\Psi=\top \) for each \( \Psi\in\setAllES \).
        For all consistent formulas \( \alpha \) we have \( \modelsOf{\Psi\clRevision\alpha}\neq\emptyset \) due to the \hyperref[pstl:globalconsistency]{global consistency} of~\( \setAllES \). \qedhere
    \end{itemize}
\end{proof}
With \( \textsf{ECLRev}(\epistemicSpace) \) we denote the set off all extended credibility-limited revision operators for \( \epistemicSpace \), i.e.,
    \( \textsf{ECLRev}(\epistemicSpace) \,{=}\, \{\     {\clRevision} :\setAllES\times \logicLanguage \to \setAllES \mid {\clRevision} \text{ satisfies } \allowbreak \text{\eqref{pstl:GCLR1}--}\allowbreak\text{\eqref{pstl:GCLR7}}\,\} \).
The next proposition summarizes the interrelation between  the class of extended credibility-limited revisions operators, the class of credibility-limited revision operators and the class of AGM revision operators in the framework of epistemic spaces.
\begin{proposition}
    For each epistemic space \( \epistemicSpace \) holds:
    \begin{align*}
        \textsf{AGMRev}(\epistemicSpace)& \subseteq \textsf{ECLRev}(\epistemicSpace) \\
        \textsf{CLRev}(\epistemicSpace) & \subseteq \textsf{ECLRev}(\epistemicSpace) 
    \end{align*}
\end{proposition}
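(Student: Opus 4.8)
The plan is to prove the two inclusions separately; both reduce to bookkeeping with results already established.

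For \( \textsf{AGMRev}(\epistemicSpace) \subseteq \textsf{ECLRev}(\epistemicSpace) \), nothing new is needed: the ``\( \Rightarrow \)''-direction of Proposition~\ref{prop:es_agm_is_ecl} already states that every AGM revision operator for \( \epistemicSpace \) is an extended credibility-limited revision operator for \( \epistemicSpace \) (it even exhibits a witnessing faithful credibility-limited assignment with \( C_\Psi=\Omega \) and \( b_\Psi=\bot \)). Since membership in \( \textsf{ECLRev}(\epistemicSpace) \) is by definition satisfaction of \eqref{pstl:GCLR1}--\eqref{pstl:GCLR7}, this inclusion is immediate.

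For \( \textsf{CLRev}(\epistemicSpace) \subseteq \textsf{ECLRev}(\epistemicSpace) \), I would take an arbitrary \( \clRevision \in \textsf{CLRev}(\epistemicSpace) \), i.e., an operator satisfying \eqref{pstl:CLR1}--\eqref{pstl:CLR6}, and check \eqref{pstl:GCLR1}--\eqref{pstl:GCLR7} one by one. Five of the seven coincide verbatim with credibility-limited postulates: \eqref{pstl:GCLR1}, \eqref{pstl:GCLR2}, \eqref{pstl:GCLR5}, \eqref{pstl:GCLR6} and \eqref{pstl:GCLR7} are \eqref{pstl:CLR1}, \eqref{pstl:CLR2}, \eqref{pstl:CLR4}, \eqref{pstl:CLR5} and \eqref{pstl:CLR6}, so they hold for \( \clRevision \) by assumption. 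For the remaining two, \eqref{pstl:GCLR3} (which is \eqref{pstl:CLR3wcp}) and \eqref{pstl:GCLR4} (which is \eqref{pstl:CLR3w}), I would invoke the proposition stated immediately before Definition~\ref{def:clr}: since \( \clRevision \) satisfies \eqref{pstl:CLR3}, it also satisfies \eqref{pstl:CLR3w} and \eqref{pstl:CLR3wcp}. Hence \( \clRevision \) satisfies \eqref{pstl:GCLR1}--\eqref{pstl:GCLR7}, and therefore \( \clRevision \in \textsf{ECLRev}(\epistemicSpace) \).

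I do not expect a genuine obstacle here; the argument is entirely a matter of matching the two postulate lists, the only thing requiring care being the index shift caused by replacing the single postulate \eqref{pstl:CLR3} with the pair \eqref{pstl:CLR3wcp} and \eqref{pstl:CLR3w}, and citing the correct earlier lemma for those two cases. (An alternative route for the second inclusion splits on whether \( \epistemicSpace \) is globally consistent, using Proposition~\ref{prop:no_cl_inunrestricted} when it is not and Proposition~\ref{prop:es_cl_is_ecl} together with Theorem~\ref{thm:credibilitylimited_characterization} when it is; but the direct postulate check is shorter and uniform.)
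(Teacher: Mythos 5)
Your proposal is correct, and for the first inclusion it coincides with the paper: both obtain \( \textsf{AGMRev}(\epistemicSpace) \subseteq \textsf{ECLRev}(\epistemicSpace) \) directly from the \enquote{\(\Rightarrow\)}-direction of Proposition~\ref{prop:es_agm_is_ecl}. For the second inclusion your route differs from the paper's. The paper splits on whether \( \epistemicSpace \) is \hyperref[pstl:globalconsistency]{globally consistent}: if it is, it invokes Proposition~\ref{prop:es_cl_is_ecl} (which itself rests on the semantic characterization of Theorem~\ref{thm:credibilitylimited_characterization}); if it is not, it observes via Proposition~\ref{prop:no_cl_inunrestricted} that \( \textsf{CLRev}(\epistemicSpace)=\emptyset \), so the inclusion holds vacuously. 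You instead do a direct, purely syntactic postulate check: \eqref{pstl:GCLR1}, \eqref{pstl:GCLR2}, \eqref{pstl:GCLR5}, \eqref{pstl:GCLR6}, \eqref{pstl:GCLR7} are verbatim \eqref{pstl:CLR1}, \eqref{pstl:CLR2}, \eqref{pstl:CLR4}, \eqref{pstl:CLR5}, \eqref{pstl:CLR6}, and \eqref{pstl:GCLR3}, \eqref{pstl:GCLR4} follow from \eqref{pstl:CLR3} by the unlabelled interrelation proposition preceding Definition~\ref{def:clr}. This is legitimate, uniform over all epistemic spaces, and arguably more elementary, since it bypasses the semantic machinery and the vacuous-case argument entirely; indeed, the paper itself uses exactly this postulate-matching step inside the \enquote{\(\Rightarrow\)}-direction of the proof of Proposition~\ref{prop:es_cl_is_ecl}. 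What the paper's formulation buys is brevity at the point of use (three citations of already-stated results) and an explicit reminder that \( \textsf{CLRev}(\epistemicSpace) \) degenerates to the empty class in the unrestricted setting; what yours buys is independence from Theorem~\ref{thm:credibilitylimited_characterization} and from the global-consistency dichotomy. Both are sound.
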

\begin{proof}
    The statement \( \textsf{AGMRev}(\epistemicSpace) \subseteq \textsf{ECLRev}(\epistemicSpace) \) is a direct consequence of Proposition~\ref{prop:es_agm_is_ecl}. 
    From Proposition~\ref{prop:es_cl_is_ecl}, we obtain \( \textsf{CLRev}(\epistemicSpace) \subseteq \textsf{ECLRev}(\epistemicSpace) \) whenever \( \epistemicSpace \) is a \ref{pstl:globalconsistency} epistemic spaces. In all cases where \( \epistemicSpace \) is not a \ref{pstl:globalconsistency} epistemic spaces, we obtain \( \textsf{CLRev}(\epistemicSpace)=\emptyset \) from Proposition~\ref{prop:no_cl_inunrestricted}.
\end{proof} 

\section{Conclusion}
\label{sec:conclusion}
In this paper, we considered belief changes in the unrestricted framework of epistemic spaces, which means inconsistent beliefs are permitted.
Credibility-limited revision as defined by Booth et al. \cite{KS_BoothFermeKoniecznyPerez2012} does not extend well to this unrestricted setting, as AGM revision operators are not included and no operators exist when an epistemic state is present that has inconsistent beliefs.
Extended credibility-limited revision operators are introduced, and we show that this class of operators deals with the before-mentioned problems. 
All AGM revision operators are also extended credibility-limited revision operators and extended credibility-limited revision operators do exists for epistemic spaces with inconsistent epistemic states.
Furthermore, a semantic characterization of extended credibility-limited revision is presented.
The approach here might serve as a prototype of how to deal with inconsistent beliefs in the framework of epistemic spaces. Especially, when considered other approach to belief change, e.g., like update \cite{KS_KatsunoMendelzon1991,KS_FermeGoncalves2023} and other kinds of non-prioritized belief change \cite{KS_FermeHansson2001,KS_Hansson1999b,KS_FermeHansson1999}, in the framework of epistemic spaces.

Finally, I like to remark that, independently, Grimaldi, Martinez and Rodriguez \cite{KS_GrimaldiMartinezRodriguez2024}, made a similar approach to extending credibility-limited revision, that also uses \eqref{pstl:WCP}, but does not use \eqref{pstl:CLR3w} to deal with inconsistent belief sets. A comparison of both approaches could be insightful. 
\begin{credits}
    \subsubsection{\ackname}
I thank the 
anonymous 
reviewers for their valuable hints and comments that helped me to improve this paper.
The research reported here was partially supported by the Deutsche Forschungsgemeinschaft (DFG, grant 465447331, project “Explainable Belief Merging”, EBM).
\end{credits}

\clearpage
\bibliographystyle{splncs04}
\bibliography{bibexport}

\clearpage
\appendix
\pagenumbering{Roman}\gdef\thesection{\Alph{section}} 
\makeatletter
\renewcommand\@seccntformat[1]{Appendix \csname the#1\endcsname.\hspace{0.5em}}
\makeatother
\section{Full Proof of Theorem~\ref{thm:credibilitylimited_characterization}}\label{adx:proofs}

\newenvironment{repeatprop}[1]{\smallskip\par\noindent\textbf{Proposition~\ref{#1}}.\itshape}{\par}
\newenvironment{repeatthm}[1]{\smallskip\par\noindent\textbf{Theorem~\ref{#1}}.\itshape}{\par}
\newenvironment{repeatcorollary}[1]{\smallskip\par\noindent\textbf{Corollary~\ref{#1}}.\itshape}{\par}

We will make use of the following fact.
\begin{lemma}\label{lem:sem_trichotonomy}
    Let $ {\preceq} $  be a total preorder on $ M $. For each $ X,Y\subseteq M $ it holds:
    \begin{equation*}
        {\min(X\cup Y, {\preceq} )} = \begin{cases}
            {\min(X, {\preceq} )} & \ksOR\\
            {\min(Y, {\preceq} )} & \ksOR\\
            {\min(X, {\preceq} )} \cup {\min(Y, {\preceq} )} & 
        \end{cases}
    \end{equation*}
\end{lemma}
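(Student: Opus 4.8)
The statement to prove is Lemma~\ref{lem:sem_trichotonomy}: for a total preorder $\preceq$ on $M$ and any $X, Y \subseteq M$, the set $\min(X \cup Y, \preceq)$ equals one of $\min(X,\preceq)$, $\min(Y,\preceq)$, or $\min(X,\preceq) \cup \min(Y,\preceq)$. The plan is to argue directly by comparing the minimal elements of $X$ and $Y$ under $\preceq$, using totality to split into cases.

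\medskip\noindent First I would dispose of the degenerate cases. If $X = \emptyset$, then $\min(X,\preceq) = \emptyset$ and $\min(X \cup Y,\preceq) = \min(Y,\preceq)$, so the second branch holds; symmetrically if $Y = \emptyset$. So assume both $X$ and $Y$ are non-empty. A subtlety: even non-empty subsets of $M$ need not have minimal elements in general, but here we should observe (or note it suffices) that $\min(X,\preceq)$ is non-empty whenever $X$ is — this is where finiteness of $\Omega \supseteq M$ is used, so any non-empty $X$ has a $\preceq$-minimal element. I would state this as a preliminary remark. Then pick $x \in \min(X,\preceq)$ and $y \in \min(Y,\preceq)$; by totality either $x \preceq y$ or $y \preceq x$ (or both).

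\medskip\noindent The core case analysis: suppose $x \preceq y$. I claim $\min(X \cup Y, \preceq) \supseteq \min(X,\preceq)$, and that it equals $\min(X,\preceq)$ unless additionally $y \preceq x$, in which case it equals $\min(X,\preceq) \cup \min(Y,\preceq)$. For the inclusion: any $x' \in \min(X,\preceq)$ satisfies $x' \simeq x$ (since both are $\preceq$-minimal in $X$), hence $x' \preceq x \preceq y \preceq y'$ for every $y' \in Y$ by minimality of $y$ in $Y$, and $x' \preceq x''$ for every $x'' \in X$; so $x'$ is $\preceq$-below every element of $X \cup Y$, i.e. $x' \in \min(X \cup Y,\preceq)$. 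Conversely, take any $z \in \min(X \cup Y,\preceq)$. Either $z \in X$ — then $z$ is $\preceq$-minimal in $X$ (being below all of $X \cup Y \supseteq X$), so $z \in \min(X,\preceq)$ — or $z \in Y$, in which case $z$ is $\preceq$-minimal in $Y$ so $z \in \min(Y,\preceq)$, and moreover $z \preceq x$; combined with $x \preceq z$ (as $x \in \min(X,\preceq) \subseteq \min(X \cup Y,\preceq)$ and $z \in Y \subseteq X \cup Y$, wait — rather: $x$ is below everything in $X\cup Y$, so $x \preceq z$) we get $z \simeq x$. So whenever $\min(X \cup Y,\preceq)$ meets $Y$, we have $x \simeq z \simeq y$, forcing $y \preceq x$ too, and then one checks $\min(Y,\preceq) \subseteq \min(X\cup Y,\preceq)$ by the symmetric argument; hence $\min(X \cup Y,\preceq) = \min(X,\preceq) \cup \min(Y,\preceq)$. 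If $\min(X \cup Y,\preceq)$ is disjoint from $Y$, it is contained in $\min(X,\preceq)$, giving equality with the first branch. The case $y \preceq x$ is entirely symmetric, landing in the second or third branch.

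\medskip\noindent I do not expect a serious obstacle here — the lemma is elementary. The one point that needs care is the existence of minimal elements: one must either invoke finiteness of $\Omega$ (so every non-empty $X \subseteq M \subseteq \Omega$ has a $\preceq$-minimum) or phrase the argument so it degrades gracefully when $\min(X,\preceq) = \emptyset$. Given the paper stipulates $\Sigma$ finite, hence $\Omega$ finite, the clean route is to note at the outset that for non-empty $X$, $\min(X,\preceq) \neq \emptyset$, and then run the case analysis above. The bookkeeping of the three branches is the only mildly tedious part, and it is routine once the two inclusions ($\min(X,\preceq) \subseteq \min(X\cup Y,\preceq)$ under $x \preceq y$, and the converse trichotomy for membership of $\min(X \cup Y,\preceq)$) are established.
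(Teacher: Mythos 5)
The paper states this lemma as a bare ``fact'' in the appendix and supplies no proof at all, so there is no argument of the author's to compare yours against; your proof is the standard one and it is correct. The two inclusions you establish (that $\min(X,\preceq)\subseteq\min(X\cup Y,\preceq)$ once a minimal element of $X$ lies below one of $Y$, and that every element of $\min(X\cup Y,\preceq)$ lands in $\min(X,\preceq)$ or $\min(Y,\preceq)$, forcing the union branch exactly when $\min(X\cup Y,\preceq)$ meets both sets) are exactly what is needed, and your appeal to the finiteness of $\Omega$ for the existence of minimal elements is the right justification in this paper's setting. As a minor aside, the worry you flag about $\min(X,\preceq)=\emptyset$ is not actually fatal even without finiteness --- if $X\neq\emptyset$ has no minimal element, then $\min(X\cup Y,\preceq)$ is either empty (first branch) or contained in and equal to $\min(Y,\preceq)$ (second branch) --- but handling it via finiteness as you do is perfectly adequate here.
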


\begin{repeatthm}{thm:credibilitylimited_characterization}
    Let \( \epistemicSpace=\tuple{\setAllES,\beliefSymb} \) be an epistemic space and let  $ \clRevision $ be a belief change operator for \( \epistemicSpace \).
    Then \( \clRevision \) is an extended credibility-limited revision operator for \( \epistemicSpace \) if and only if there is a faithful credibility-limited assignment $ \Psi \mapsto (\preceq_\Psi,C_\Psi, b_{\Psi}) $ that is \ref{eq:cl_revision} with \( \clRevision \).
\end{repeatthm}
	\begin{proof}We consider both directions of the claim independently.

		\emph{The \enquote{\( \Rightarrow \)}-direction.} Let \( \clRevision \) be a credibility-limited revision operator for \( \epistemicSpace \).
		We construct a mapping $ \Psi \mapsto (\preceq_\Psi,C_\Psi, b_{\Psi}) $.
		We set \( C_{\Psi} \) as follows
		\begin{align*}
			C_{\Psi}   & = \{  \omega \mid \modelsOf{\varphi_{\omega}} = \modelsOf{\Psi\clRevision\varphi_{\omega}} \}  ,  \tag{{see \text{\cite[Remark 1]{KS_BoothFermeKoniecznyPerez2012}}}}
		\end{align*}
		for each \( \Psi\in\setAllES \), where \( \varphi_\omega \) denotes a formula with \( \modelsOf{\varphi_\omega}=\{\omega\} \).
		If \( \modelsOf{\Psi}\neq\emptyset \) and  \( \bot\in\beliefsOf{\Psi\clRevision\bot} \), then we  set \( b_{\Psi}=\bot \); otherwise we set \( b_\Psi=\top \).
		For each \( \Psi\in\setAllES \) let \( {\preceq_{\Psi}}\subseteq C_\Psi\times C_\Psi \)   be the relation such that
		\begin{align*}
			\omega_1 \preceq_{\Psi} \omega_2 &   \text{ if and only if }  \omega_1\in \modelsOf{\Psi\clRevision\varphi_{\omega_1,\omega_2}}
		\end{align*}
		holds, where \( \varphi_{\omega_1,\omega_2} \) denotes a formula with \( \modelsOf{\varphi_{\omega_1,\omega_2}}=\{\omega_1,\omega_2\} \).
		Next, we show that $ \Psi \mapsto (\preceq_\Psi,C_\Psi, b_{\Psi}) $ is a credibility-limited assignment.
		\begin{itemize}\setlength{\itemsep}{0pt}
			\item[]\hspace{-1.75ex}\emph{\( \modelsOf{\Psi}\subseteq C_\Psi \).} Let \( \omega\in\modelsOf{\Psi} \) and \( \varphi_\omega \) such that \( \modelsOf{\varphi_{\omega}}=\{\omega\} \). Clearly, \( \varphi_{\omega} \) is a formula such that \( \beliefsOf{\Psi}\cup\{\varphi_{\omega}\} \) is consistent. 
			From \eqref{pstl:GCLR2} we obtain \( \modelsOf{\varphi_{\omega}}=\modelsOf{\Psi\clRevision\varphi_{\omega}} \). 
			Consequently, we obtain \( \omega\in C_\Psi \) from the definition of \( C_\Psi \). This shows \( \modelsOf{\Psi}\subseteq C_\Psi \).
			
			\item[]\hspace{-1.75ex}\emph{\( {\preceq_{\Psi}} \) is a total preorder.} Reflexivity is a direct consequence of totality, thus in the following we show only totality and transitivity of \( \preceq_{\Psi} \):

				\emph{Totality.} Let \( \omega_1,\omega_2\in C_\Psi \). 
					We show totality by contradiction. Therefore, assume \( \omega_1\not\preceq_{\Psi} \omega_2 \) and \( \omega_2 \not\preceq_{\Psi} \omega_1 \) in the following.
					From the definition of \( {\preceq_{\Psi}} \)  we obtain \( \omega_1,\omega_2\notin\modelsOf{\Psi\clRevision\varphi_{\omega_1,\omega_2}} \), where \( \varphi_{\omega_1,\omega_2} \) is a formula such that \( \modelsOf{\Psi\clRevision\varphi_{\omega_1,\omega_2}}=\{\omega_1,\omega_2\} \).
					From \eqref{pstl:GCLR5}, we obtain that \( \modelsOf{\Psi\clRevision\varphi_{\omega_1,\omega_2}}=\modelsOf{\Psi\clRevision(\varphi_{\omega_1}\lor\varphi_{\omega_2})} \) holds.
					By \eqref{pstl:GCLR7} we have that \( \modelsOf{\Psi\clRevision(\varphi_{\omega_1}\lor\varphi_{\omega_2})} \) is equivalent to either \( \modelsOf{\Psi\clRevision\varphi_{\omega_1}} \) or \( \modelsOf{\Psi\clRevision\varphi_{\omega_2}} \) or \( \modelsOf{\Psi\clRevision\varphi_{\omega_1}}\cup\modelsOf{\Psi\clRevision\varphi_{\omega_2}} \).
					From the definition of \( C_\Psi \) we obtain \( \modelsOf{\Psi\clRevision\varphi_{\omega_1}}=\{\omega_1\} \) and \( \modelsOf{\Psi\clRevision\varphi_{\omega_2}}=\{\omega_2\} \).
					Consequently, we obtain that \( \omega_1\in\modelsOf{\Psi\clRevision\varphi_{\omega_1,\omega_2}} \) or \( \omega_2\in\modelsOf{\Psi\clRevision\varphi_{\omega_1,\omega_2}} \) holds, which is a contradiction to our prior observation of \( \omega_1,\omega_2\notin\modelsOf{\Psi\clRevision\varphi_{\omega_1,\omega_2}} \).
				\emph{Transitivity.} Let \( \omega_1,\omega_2,\omega_3\in C_\Psi \). 
				We show transitivity by contradiction. Therefore, we assume \( \omega_1\preceq_{\Psi}\omega_2 \) and \( \omega_2\preceq_{\Psi}\omega_3 \) and \( \omega_1\not\preceq_{\Psi}\omega_3 \) in the following.
				The latter assumption and the definition of \( \preceq_\Psi \) yield \( {\omega_1\notin\modelsOf{\Psi\clRevision(\varphi_{\omega_1,\omega_3})}} \).
				By the definition of \( C_\Psi \), and using \eqref{pstl:GCLR5} and \eqref{pstl:GCLR7}, we obtain \( \modelsOf{\Psi\clRevision(\varphi_{\omega_1,\omega_3})}=\{\omega_3\} \).
				In the following, we consider the same cases as in \cite[p. 22]{KS_DarwichePearl1997}:
				\begin{itemize}\setlength{\itemsep}{0pt}
					\item[]\hspace{-1.75ex}\emph{\( \omega_1\in\modelsOf{\Psi} \).} Observe that \( \clRevision \) satisfies \eqref{pstl:GCLR2}, and thus we obtain the contradiction \( \omega_1\in\modelsOf{\Psi\clRevision\varphi_{\omega_1,\omega_3}} \). 

					\item[]\hspace{-1.75ex}\emph{\( \omega_1\notin\modelsOf{\Psi} \) and \( \omega_2\in\modelsOf{\Psi} \).} Observe that \( \clRevision \) satisfies \eqref{pstl:GCLR2}, and thus we have \( \omega_2\in\modelsOf{\Psi\clRevision\varphi_{\omega_1,\omega_2}} \) and \( \omega_1\notin\modelsOf{\Psi\clRevision\varphi_{\omega_1,\omega_2}} \). Thus, by the definition of \( \preceq_{\Psi} \), we obtain the contradiction \( \omega_1\not\preceq_{\Psi}\omega_2 \).
					
					\item[]\hspace{-1.75ex}\emph{\( \omega_1\notin\modelsOf{\Psi} \) and \( \omega_2\notin\modelsOf{\Psi} \).}
					In the following let \( \varphi_{\omega_1,\omega_2,\omega_3} \) be a formula such that \( \modelsOf{\varphi_{\omega_1,\omega_2,\omega_3}}=\{\omega_1,\omega_2,\omega_3\} \).
					Recall that by the definition of \( C_\Psi \) we have \( \modelsOf{\Psi\clRevision\varphi_{\omega}}=\{\omega\} \) for each \( \omega\in\{\omega_1,\omega_2,\omega_3\} \).
					We consider two subcases:

					\emph{The case of \( \modelsOf{\Psi\clRevision\varphi_{\omega_1,\omega_2,\omega_3}} = \{\omega_3\} \).} Using \eqref{pstl:GCLR5} we obtain \( \modelsOf{\Psi\clRevision\varphi_{\omega_1,\omega_2,\omega_3}}=\modelsOf{\Psi\clRevision(\varphi_{\omega_1}\lor\varphi_{\omega_2,\omega_3})} \).
					Because we have \( \omega_3\notin \modelsOf{\Psi\clRevision\varphi_{\omega_1}} \), we obtain \( \modelsOf{\Psi\clRevision\varphi_{\omega_1,\omega_2,\omega_3}}=\modelsOf{\Psi\clRevision\varphi_{\omega_2,\omega_3}} \) from \eqref{pstl:GCLR7}.
					Using the definition of \( \preceq_{\Psi} \) and \( \modelsOf{\Psi\clRevision\varphi_{\omega_1,\omega_2,\omega_3}}=\{\omega_3\} \) we obtain the contradiction \( \omega_2\not\preceq_{\Psi}\omega_3 \).

					\emph{The case of \( \modelsOf{\Psi\clRevision\varphi_{\omega_1,\omega_2,\omega_3}} \neq \{\omega_3\} \).} 
					By using \eqref{pstl:GCLR5} we obtain \( \modelsOf{\Psi\clRevision\varphi_{\omega_1,\omega_2,\omega_3}}=\modelsOf{\Psi\clRevision(\varphi_{\omega_1,\omega_2}\lor\varphi_{\omega_1,\omega_3})} \).
					Because we have \( \modelsOf{\Psi\clRevision\varphi_{\omega_1,\omega_3}}=\{\omega_3\} \) and \( \modelsOf{\Psi\clRevision\varphi_{\omega_1,\omega_2,\omega_3}} \neq \{\omega_3\} \), we obtain  \( \modelsOf{\Psi\clRevision\varphi_{\omega_1,\omega_2}} \subseteq \modelsOf{\Psi\clRevision\varphi_{\omega_1,\omega_2,\omega_3}} \) from \eqref{pstl:GCLR7}.
					By using the definition of \( \preceq_{\Psi} \) and \( \omega_1\preceq_{\Psi}\omega_2 \) we obtain \( \omega_1\in\modelsOf{\Psi\clRevision\varphi_{\omega_1,\omega_2}} \).
					Consequently, we have \( \omega_1\in \modelsOf{\Psi\clRevision\varphi_{\omega_1,\omega_2,\omega_3}} \).
					
					By using \eqref{pstl:GCLR5} again, we obtain \( \modelsOf{\Psi\clRevision\varphi_{\omega_1,\omega_2,\omega_3}}=\modelsOf{\Psi\clRevision(\varphi_{\omega_2}\lor\varphi_{\omega_1,\omega_3})} \).
					Because we have \( \omega_1\in \modelsOf{\Psi\clRevision\varphi_{\omega_1,\omega_2,\omega_3}} \) and \( \modelsOf{\Psi\clRevision\varphi_{\omega_2}}=\{\omega_2\} \), we obtain \( \modelsOf{\Psi\clRevision\varphi_{\omega_1,\omega_2,\omega_3}}\cap\{\omega_1,\omega_3\}=\modelsOf{\Psi\clRevision\varphi_{\omega_1,\omega_3}} \) from \eqref{pstl:GCLR7}.
					Consequently, we obtain \( \omega_1\in\modelsOf{\Psi\clRevision\varphi_{\omega_1,\omega_3}} \), which yields the contradiction  \( \omega_1\preceq_{\Psi} \omega_3 \).
				\end{itemize}
			\item[]\hspace{-1.75ex}\emph{Satisfaction of \eqref{pstl:CLAbot}.} Suppose that \( b_\Psi=\bot \) holds. Then, by the definition of \( b_\Psi \) we have \( \modelsOf{\Psi}\neq\emptyset \) and \( \bot\in\beliefsOf{\Psi\clRevision\bot} \). We show \( C_\Psi=\Omega  \) by contradiction and assume therefore the existence of an \( \omega\in\Omega \) such that \( \omega\notin C_\Psi \).
			Because \( \clRevision \) satisfies \eqref{pstl:GCLR6}, we obtain \( \modelsOf{\Psi\clRevision\varphi_{\omega}}\subseteq\{\omega\} \) from \( \bot\in\beliefsOf{\Psi\clRevision\bot} \) and \( \bot\models\varphi_{\omega} \).
			From \( \omega\notin C_\Psi \) and the definition of \( C_\Psi \) we  obtain that \( \modelsOf{\Psi\clRevision\varphi_{\omega}}\neq\{\omega\} \) holds.
			By these observations, \( \modelsOf{\Psi\clRevision\varphi_{\omega}}=\emptyset \) remains as the only possibility.
			From \eqref{pstl:GCLR3} we obtain the contradiction that either \( \modelsOf{\Psi}=\emptyset \) or \( \modelsOf{\varphi_{\omega}}=\emptyset \) holds.
			Consequently, we have \( \omega \in C_\Psi \).
		\end{itemize}    
		In summary, $ \Psi \mapsto (\preceq_\Psi,C_\Psi, b_{\Psi}) $ is a credibility-limited assignment.
		We show that $ \Psi \mapsto (\preceq_\Psi,C_\Psi, b_{\Psi}) $ is faithful.
		Suppose that \( \modelsOf{\Psi}\neq\emptyset \) holds.
		\begin{itemize}\setlength{\itemsep}{0pt}
			\item[]\hspace{-1.75ex}\emph{\eqref{pstl:CLFA1}} Let \( \omega_1 \in \modelsOf{\Psi} \) and \( \omega_2 \in \modelsOf{\Psi} \). 
			From the satisfaction of \eqref{pstl:GCLR2} by \( \clRevision \) we obtain \( \omega_1,\omega_2\in\modelsOf{\Psi\clRevision\varphi_{\omega_1,\omega_2}} \). 
			Then, applying the definition of \( \preceq_{\Psi} \) yields \( \omega_1 \simeq_\Psi \omega_2 \), i.e., \(  \omega_1 \preceq_\Psi \omega_2 \) and  \(  \omega_2 \preceq_\Psi \omega_1 \).
			\item[]\hspace{-1.75ex}\emph{\eqref{pstl:CLFA2}} Let \( \omega_1 \in \modelsOf{\Psi} \) and \( \omega_2 \notin \modelsOf{\Psi} \). Using the satisfaction of \eqref{pstl:GCLR2} by \( \clRevision \) again, we obtain \( \omega_1\in\modelsOf{\Psi\clRevision\varphi_{\omega_1,\omega_2}} \) and \( \omega_2\notin\modelsOf{\Psi\clRevision\varphi_{\omega_1,\omega_2}} \). From the definition of \( \preceq_{\Psi} \) we obtain \( \omega_1 <_\Psi \omega_2 \).
		\end{itemize}

		Next, we show that $ \Psi \mapsto (\preceq_\Psi,C_\Psi, b_{\Psi}) $ is \ref{eq:cl_revision} with \( \clRevision \).
		Therefore, we consider four cases  in the following: the case of \( \modelsOf{\alpha}\cap C_\Psi\neq\emptyset \), the case of \( \modelsOf{\alpha}=\emptyset \) and \( b_\Psi=\bot \), the case of \( \modelsOf{\alpha}=\emptyset \) and \( b_\Psi=\top \), and the case of \( \modelsOf{\alpha}\neq\emptyset \) and \( {\modelsOf{\alpha}\cap C_\Psi=\emptyset} \).
		\begin{itemize}\setlength{\itemsep}{0pt}
			\item[]\hspace{-1.75ex}\emph{The case of \( \modelsOf{\alpha}\cap C_\Psi\neq\emptyset \).} 
			In this case, we directly obtain that \( \alpha \) is consistent.
			Moreover, from \( \modelsOf{\alpha}\cap C_\Psi\neq\emptyset \) we obtain an interpretation \( \omega\in \modelsOf{\alpha}\cap C_\Psi \).
			The definition of \( C_\Psi \) yields \( \modelsOf{\Psi\clRevision\varphi_\omega}=\{\omega\}  \).
			We obtain \( \modelsOf{\Psi\clRevision\alpha}\neq\emptyset \) from \eqref{pstl:GCLR4} and \( \varphi_\omega\models\alpha \), and from \eqref{pstl:GCLR6} that \( \modelsOf{\Psi\clRevision\alpha}\subseteq\modelsOf{\alpha} \) holds. 

			As the next step, we show \( \min(\modelsOf{\alpha},\preceq_{\Psi}) \subseteq \modelsOf{\Psi\clRevision\alpha} \) by contradiction.
			Suppose that there exists some \( \omega\in \min(\modelsOf{\alpha},\preceq_{\Psi})\) such that \( \omega\notin \modelsOf{\Psi\clRevision\alpha} \) holds.
			We consider two subcases.
			\begin{itemize}\setlength{\itemsep}{0pt}
				\item[]\hspace{-1.75ex}\emph{\( \omega\in\modelsOf{\Psi}  \).} Then we obtain \( \omega\in \modelsOf{\Psi\clRevision\alpha} \) from \eqref{pstl:GCLR2}.
				\item[]\hspace{-1.75ex}\emph{\( \omega\notin\modelsOf{\Psi}  \).}
				Because of \( \modelsOf{\Psi\clRevision\alpha}\neq\emptyset \) there exists some \( \omega'\in \modelsOf{\Psi\clRevision\alpha} \).
				
				We consider the case of \( \omega'\notin C_\Psi \).
				Consequently, we have \( \omega'\notin\modelsOf{\Psi} \).
				Now let \( \gamma \) be  a formula with \( \alpha\equiv \gamma\lor\varphi_{\omega'} \) and \( \omega'\not\models\gamma \).
				Using \eqref{pstl:GCLR5} we obtain that  \( \modelsOf{\Psi\clRevision\alpha}=\modelsOf{\Psi\clRevision(\gamma\lor\varphi_{\omega'})} \) holds.
				Note that by \eqref{pstl:GCLR1} we have \( \modelsOf{\Psi\clRevision\gamma}=\modelsOf{\Psi} \) or \( \modelsOf{\Psi\clRevision\gamma}\subseteq\modelsOf{\gamma} \). This implies that we have \( \omega'\notin \modelsOf{\Psi\clRevision\gamma} \).
				Consequently, we obtain \( \omega'\in\modelsOf{\Psi\clRevision\varphi_{\omega'}} \) from \eqref{pstl:GCLR7}.
				Using \( \omega'\notin\modelsOf{\Psi} \), \( \omega'\in\modelsOf{\Psi\clRevision\varphi_{\omega'}} \) and \eqref{pstl:GCLR1}, we obtain \( \modelsOf{\Psi\clRevision\varphi_{\omega'}}=\{\omega'\} \), which yields the contradiction \( \omega'\in C_\Psi \).
				
				We consider the case of \( \omega'\in C_\Psi \).
				Because of \( \omega\in \min(\modelsOf{\alpha},\preceq_{\Psi})\) we have \( \omega \preceq_{\Psi} \omega' \).
				Moreover, from faithfulness and \( \omega\notin\modelsOf{\Psi}  \), we obtain \( \omega'\notin\modelsOf{\Psi} \) from \( \omega\in \min(\modelsOf{\alpha},\preceq_{\Psi})\).
				From the definition of \( \preceq_{\Psi} \) we obtain \( \omega\in\modelsOf{\Psi\clRevision\varphi_{\omega,\omega'}} \).
				Now let \( \gamma \) be  a formula such that \( \omega,\omega'\notin\modelsOf{\gamma} \) and \( \alpha\equiv\gamma\lor\varphi_{\omega,\omega'} \).
				Note that by \eqref{pstl:GCLR1} we have \( \modelsOf{\Psi\clRevision\gamma}=\modelsOf{\Psi} \) or \( \modelsOf{\Psi\clRevision\gamma}\subseteq\modelsOf{\gamma} \).
				This together with \( \omega,\omega'\notin\modelsOf{\Psi} \) implies \( \omega'\notin \modelsOf{\Psi\clRevision\gamma} \).
				Using \eqref{pstl:GCLR5} we obtain that  \( \modelsOf{\Psi\clRevision\alpha}=\modelsOf{\Psi\clRevision(\gamma\lor\varphi_{\omega,\omega'})} \) holds.
				Because \( \clRevision \) satisfies \eqref{pstl:GCLR7} and \( \omega'\notin \modelsOf{\Psi\clRevision\gamma} \) holds, we have \( \modelsOf{\Psi\clRevision\alpha}=\modelsOf{\Psi\clRevision\varphi_{\omega,\omega'}}  \) or \( \modelsOf{\Psi\clRevision\alpha}=\modelsOf{\Psi\clRevision\gamma}\cup\modelsOf{\Psi\clRevision\varphi_{\omega,\omega'}}  \). In both cases we obtain \( \omega\in \modelsOf{\Psi\clRevision\alpha} \) from \( \omega\in\modelsOf{\Psi\clRevision\varphi_{\omega,\omega'}}  \).
			\end{itemize}
			In summary, \( \min(\modelsOf{\alpha},\preceq_{\Psi}) \subseteq \modelsOf{\Psi\clRevision\alpha} \) holds.
			
			We show by contradiction that \( \modelsOf{\Psi\clRevision\alpha}  \subseteq \min(\modelsOf{\alpha},\preceq_{\Psi}) \) holds.
			Therefore, suppose that there exists some \( \omega_1\in\modelsOf{\Psi\clRevision\alpha} \) such that \( \omega_1 \notin \min(\modelsOf{\alpha},\preceq_{\Psi}) \).
			The faithfulness of $ \Psi \mapsto (\preceq_\Psi,C_\Psi, b_{\Psi}) $ and \( \omega_1\in\modelsOf{\Psi}  \) together imply \( \omega_1\in  \min(\modelsOf{\alpha},\preceq_{\Psi})  \). 
			In the following we consider the remaining case of \( \omega_1 \notin \modelsOf{\Psi} \).
			From \( \modelsOf{\alpha}\cap C_\Psi\neq\emptyset \) we obtain that there exists some \( \omega_2\in\Omega \) with \( \omega_2\in\min(\modelsOf{\alpha},\preceq_{\Psi}) \).
			As shown before, we have \( \omega_2\in \modelsOf{\Psi\clRevision\alpha}\).
			Because \( \clRevision \) satisfies \eqref{pstl:GCLR2} and \(\omega_2\in \modelsOf{\Psi\clRevision\alpha}  \), we obtain \( \omega_2 \notin \modelsOf{\Psi} \) from \( \omega_1 \notin \modelsOf{\Psi} \).
			Now let \( \gamma \) be a formula such that \( \alpha\equiv\gamma\lor\varphi_{\omega_1,\omega_2} \) and \( \omega_1,\omega_2\notin\modelsOf{\gamma} \).
			Note that by \eqref{pstl:GCLR1} we have \( \modelsOf{\Psi\clRevision\gamma}=\modelsOf{\Psi} \) or \( \modelsOf{\Psi\clRevision\gamma}\subseteq\modelsOf{\gamma} \).
			This together with \( \omega_1,\omega_2\notin\modelsOf{\Psi} \) implies \( \omega_1,\omega_2\notin \modelsOf{\Psi\clRevision\gamma} \). 
			Using \eqref{pstl:GCLR5} we obtain that  \( \modelsOf{\Psi\clRevision\alpha}=\modelsOf{\Psi\clRevision(\gamma\lor\varphi_{\omega,\omega'})} \) holds.
			Because \( \clRevision \) satisfies \eqref{pstl:GCLR7} and \( \omega_1,\omega_2\notin \modelsOf{\Psi\clRevision\gamma} \) holds, we have \( \modelsOf{\Psi\clRevision\alpha}=\modelsOf{\Psi\clRevision\varphi_{\omega_1,\omega_2}}  \) or \( \modelsOf{\Psi\clRevision\alpha}=\modelsOf{\Psi\clRevision\gamma}\cup\modelsOf{\Psi\clRevision\varphi_{\omega_1,\omega_2}}  \).
			We obtain \( \modelsOf{\Psi\clRevision\alpha}\cap\{\omega_1,\omega_2\}=\modelsOf{\Psi\clRevision\varphi_{\omega_1,\omega_2}}=\{\omega_1,\omega_2\} \).
			Applying \eqref{pstl:GCLR5} and \eqref{pstl:GCLR7} again yields \( \modelsOf{\Psi\clRevision\varphi_{\omega_1,\omega_2}} = \modelsOf{\Psi\clRevision\varphi_{\omega_1}}\cup\modelsOf{\Psi\clRevision\varphi_{\omega_2}}   \).
			This together with \( \omega_1 \notin \modelsOf{\Psi} \) and \eqref{pstl:GCLR1} yields \( \modelsOf{\Psi\clRevision\varphi_{\omega_1}}=\{\omega_1\} \).
			By the definition of \( C_\Psi \), we obtain \( \omega_1\in C_\Psi\).
			Moreover, from \( \modelsOf{\Psi\clRevision\varphi_{\omega_1,\omega_2}}=\{\omega_1,\omega_2\} \) we obtain \( \omega_1\preceq_\Psi\omega_2 \) from the definition of \( \preceq_{\Psi} \).
			This last observation together with \( \omega_2\in\min(\modelsOf{\alpha},\preceq_{\Psi}) \) implies the contradiction \( \omega_1\in\min(\modelsOf{\alpha},\preceq_{\Psi}) \).

			\item[]\hspace{-1.75ex}\emph{The case of \( \modelsOf{\alpha}=\emptyset \) and \( b_\Psi=\bot \).} We show \( \alpha\in\beliefsOf{\Psi\clRevision\alpha} \), i..e, \( \modelsOf{\Psi\clRevision\alpha}=\emptyset \).
			From the definition of \( b_\Psi \) we obtain \( \bot\in\beliefsOf{\Psi\clRevision\bot} \) from \( b_\Psi=\bot \). 
			Because $\clRevision$ satisfies \eqref{pstl:GCLR5}, we obtain \( \alpha\in\beliefsOf{\Psi\clRevision\alpha} \).
			
			\item[]\hspace{-1.75ex}\emph{The case of \( \modelsOf{\alpha}=\emptyset \) and \( b_\Psi=\top \).} We show \( \beliefsOf{\Psi\clRevision\alpha}=\beliefsOf{\Psi} \), i.e., \( \modelsOf{\Psi\clRevision\alpha}=\modelsOf{\Psi} \).
			Consulting \eqref{pstl:GCLR1} yields that \( \alpha\in\beliefsOf{\Psi\clRevision\alpha} \) is the only non-trivial case to consider. 
			Because \( \clRevision \) satisfies \eqref{pstl:GCLR5} we obtain \( \bot\in\beliefsOf{\Psi\clRevision\bot} \).
			From the definition of \( b_\Psi \), and from \( b_\Psi=\top \) and \( \bot\in\beliefsOf{\Psi\clRevision\bot} \), we obtain \( \modelsOf{\Psi}=\emptyset \). 
			Consequently, we have \( \beliefsOf{\Psi\clRevision\alpha}=\beliefsOf{\Psi} \).
			
			\item[]\hspace{-1.75ex}\emph{The case of \( \modelsOf{\alpha}\neq\emptyset \) and \( \modelsOf{\alpha}\cap C_\Psi=\emptyset \).} We show \( \beliefsOf{\Psi\clRevision\alpha}=\beliefsOf{\Psi} \).
			Note that \( \modelsOf{\Psi}\subseteq C_\Psi \) holds, and therefore \( \modelsOf{\alpha}\cap C_\Psi=\emptyset \) implies \( \modelsOf{\alpha}\cap\modelsOf{\Psi}\neq\emptyset \).
			From the definition of \( C_\Psi \) we obtain \( \modelsOf{\Psi\clRevision\varphi_{\omega}}\neq\{\omega\} \) for each \( \omega\in\modelsOf{\alpha} \).  
			
			If \( \modelsOf{\alpha} \) is a singleton set, then we obtain \( \modelsOf{\Psi\clRevision\alpha}=\emptyset \). From consistency of \( \alpha \) and \eqref{pstl:GCLR3} we obtain that \( \beliefsOf{\Psi} \) is inconsistent, showing that \( \modelsOf{\Psi}=\modelsOf{\Psi\clRevision\alpha} \) holds.

			We consider the remaining case of \( \modelsOf{\alpha}=\{\omega_1,\ldots,\omega_n\} \) for \( n>1 \).
			Towards a contradiction, suppose \( \modelsOf{\Psi}\neq\modelsOf{\Psi\clRevision\alpha} \). Thus, by \eqref{pstl:GCLR1} we have \( \modelsOf{\Psi\clRevision\alpha}\subseteq \modelsOf{\alpha} \).
			We consider two subcases:
			\begin{itemize}\setlength{\itemsep}{0pt}
				\item[]\hspace{-1.75ex}\emph{\( \modelsOf{\Psi\clRevision\alpha}=\emptyset \).} As before, from consistency of \( \alpha \) and \eqref{pstl:GCLR3} we obtain that \( \beliefsOf{\Psi} \) is inconsistent, showing that \( \modelsOf{\Psi}=\modelsOf{\Psi\clRevision\alpha} \) holds.
				\item[]\hspace{-1.75ex}\emph{\( \modelsOf{\Psi\clRevision\alpha}\neq\emptyset \).} Let \( \omega\in\modelsOf{\Psi\clRevision\alpha} \) and let  \( \gamma_{\omega} \) be a formula such that \( \alpha\equiv\gamma_{\omega}\lor\varphi_\omega \) and \( \omega\notin\modelsOf{\gamma_{\omega}} \).
				Note that by \eqref{pstl:GCLR1} we have \( \modelsOf{\Psi\clRevision\gamma_{\omega}}=\modelsOf{\Psi} \) or \( \modelsOf{\Psi\clRevision\gamma_{\omega}}\subseteq\modelsOf{\gamma_{\omega}} \).
				This together with \( \omega\notin\modelsOf{\Psi} \) implies \( \omega\notin \modelsOf{\Psi\clRevision\gamma_{\omega}} \). 
				By \eqref{pstl:GCLR5} we obtain \( \modelsOf{\Psi\clRevision\alpha}=\modelsOf{\Psi\clRevision(\gamma_{\omega}\lor\varphi_{\omega})} \).
				Because \( \clRevision \) satisfies \eqref{pstl:GCLR7} and \( \omega\notin \modelsOf{\Psi\clRevision\gamma_{\omega}} \) holds, we have \( \modelsOf{\Psi\clRevision\alpha}=\modelsOf{\Psi\clRevision\varphi_{\omega}}  \) or \( \modelsOf{\Psi\clRevision\alpha}=\modelsOf{\Psi\clRevision\gamma_{\omega}}\cup\modelsOf{\Psi\clRevision\varphi_{\omega}}  \).
				From this observation, we obtain \( \modelsOf{\Psi\clRevision\alpha}\cap\{\omega\}=\modelsOf{\Psi\clRevision\varphi_{\omega}}=\{\omega\} \), a contradiction to \(  \modelsOf{\Psi\clRevision\varphi_{\omega}}\neq\{\omega\} \).
			\end{itemize}

		\end{itemize}

		\emph{The \enquote{\( \Leftarrow \)}-direction.} 
		In the following, let \( \clRevision \) be a belief change operator for \( \epistemicSpace \) and let \( \Psi \mapsto {(\preceq_\Psi,C_\Psi, b_{\Psi})}  \) be a faithful credibility-limited assignment  that is \ref{eq:cl_revision} with \( \clRevision \).
		We show satisfaction of \eqref{pstl:GCLR1}, \eqref{pstl:GCLR2}, \eqref{pstl:GCLR4}, \eqref{pstl:GCLR3} and \eqref{pstl:GCLR5}--\eqref{pstl:GCLR7}.
		\begin{itemize}\setlength{\itemsep}{0pt}
			\item[]\hspace{-1.75ex}\eqref{pstl:GCLR1} Due to the credibility-limited-compatibility of \( \Psi \mapsto (\preceq_\Psi,C_\Psi, b_{\Psi})  \) with \( \clRevision \) there are three cases to consider.
			If \( \modelsOf{\alpha} \cap C_\Psi \neq \emptyset \), we obtain \( \modelsOf{\Psi\clRevision\alpha}\subseteq\modelsOf{\alpha} \). Consequently, we have that \( \alpha\in\beliefsOf{\Psi\clRevision\alpha} \). 
			If \( {\modelsOf{\alpha}=\emptyset} \) and \( {b_\Psi=\bot} \) holds, we obtain \( \modelsOf{\Psi\clRevision\alpha} = \emptyset \). Consequently, we have \( \beliefsOf{\Psi\clRevision\alpha}=\Cn(\bot) \), and thus \( \alpha\in\beliefsOf{\Psi\clRevision\alpha} \) holds. 
			If none of the cases above applies, we obtain \( \modelsOf{\Psi\clRevision\alpha} = \modelsOf{\Psi} \), which is equivalent to \( \beliefsOf{\Psi\clRevision\alpha}=\beliefsOf{\Psi} \).

			\item[]\hspace{-1.75ex}\eqref{pstl:GCLR2} Suppose that \( \beliefsOf{\Psi}+\alpha \) is consistent, i.e., \( \modelsOf{\Psi}\cap\modelsOf{\alpha} \) is non-empty.
			Due to the faithfulness of \( \Psi \mapsto (\preceq_\Psi,C_\Psi, b_{\Psi}) \), we obtain that \( {\modelsOf{\Psi}\cap\modelsOf{\alpha}} = {\minOf{\modelsOf{\alpha}}{\preceq_{\Psi}}} \) holds.
			From the credibility-limited-compatibility of \( \Psi \mapsto (\preceq_\Psi,C_\Psi, b_{\Psi})  \) with \( \clRevision \) we obtain \( \modelsOf{\Psi\clRevision\alpha}=\modelsOf{\Psi}\cap\modelsOf{\alpha} \).
			This is equivalent to \( \beliefsOf{\Psi\clRevision\alpha}=\beliefsOf{\Psi}+\alpha \).
			
			\item[]\hspace{-1.75ex}\eqref{pstl:GCLR5} Let \( \alpha\equiv\beta \), i.e., \( \modelsOf{\alpha}=\modelsOf{\beta} \). From credibility-limited-compatibility of \( \Psi \mapsto (\preceq_\Psi,C_\Psi, b_{\Psi})  \) with \( \clRevision \) we immediately obtain \( \modelsOf{\Psi\clRevision\alpha}=\modelsOf{\Psi\clRevision\beta} \).
			
			\item[]\hspace{-1.75ex}\eqref{pstl:GCLR4} Suppose that \( \beliefsOf{\Psi\clRevision\alpha} \) is consistent and \( \alpha\models\beta \) holds, i.e., we have \( \modelsOf{\Psi\clRevision\alpha}\neq\emptyset \) and \( \modelsOf{\alpha}\subseteq\modelsOf{\beta} \). 
			We show that \( \beliefsOf{\Psi\clRevision\beta} \) is consistent.
			If \( \alpha\equiv\beta \), we obtain the claim directly from \eqref{pstl:GCLR5}. In the following we assume \( \modelsOf{\alpha}\subsetneq \modelsOf{\beta} \).
			Consequently, we have that \( \modelsOf{\beta}\neq\emptyset \).
			From credibility-limited-compatibility of \( \Psi \mapsto (\preceq_\Psi,C_\Psi, b_{\Psi})  \) with \( \clRevision \) and consistency of \( \modelsOf{\Psi\clRevision\alpha} \) we obtain that either \( \modelsOf{\Psi\clRevision\alpha}=\min(\modelsOf{\alpha},\preceq_{\Psi}) \) or \( \modelsOf{\Psi\clRevision\alpha}=\modelsOf{\Psi} \) holds.
			We consider three cases:
			\begin{itemize}\setlength{\itemsep}{0pt}
				\item[]\hspace{-1.75ex}\emph{\( \alpha \) is inconsistent.} In this case, we have \( \modelsOf{\Psi\clRevision\alpha}=\modelsOf{\Psi} \). As a direct consequence, we obtain from credibility-limited-compatibility that \( \modelsOf{\Psi}\neq\emptyset \) holds. 
				Recalling that \( \beta \) is consistent, consultation of credibility-limited-compatibility reveals that \( \beliefsOf{\Psi\clRevision\beta} \) is consistent in all cases.
				\item[]\hspace{-1.75ex}\emph{\( \alpha \) is consistent and \( \modelsOf{\Psi\clRevision\alpha}=\min(\modelsOf{\alpha},\preceq_{\Psi}) \).} 
				From credibility-limited-compatibility of \( \Psi \mapsto (\preceq_\Psi,C_\Psi, b_{\Psi})  \) with \( \clRevision \) we obtain \( {\modelsOf{\alpha}\cap C_\Psi \neq\emptyset} \). 
				Consequently, we also have \( \modelsOf{\beta}\cap C_\Psi \neq\emptyset \). Consulting credibility-limited-compatibility yields that \( \beliefsOf{\Psi\clRevision\beta} \) is consistent. 
				\item[]\hspace{-1.75ex}\emph{\( \alpha \) is consistent and \( \modelsOf{\Psi\clRevision\alpha}=\modelsOf{\Psi} \).}
				From consistency of \( \alpha \) we obtain consistency of \( \beta \). Thus, by credibility-limited-compatibility, we obtain that \( \beliefsOf{\Psi\clRevision\beta} \) is consistent.
			\end{itemize}
			
			\item[]\hspace{-1.75ex}\eqref{pstl:GCLR3} Suppose that \( \beliefsOf{\Psi\clRevision\alpha} \) is inconsistent.
			We show that \( \beliefsOf{\Psi} \)  is inconsistent or \( \alpha \) is inconsistent, by obtaining a contradiction when assuming the contrary, i.e., \( \beliefsOf{\Psi} \)  is consistent and \( \alpha \) is consistent. From credibility-limited-compatibility of \( \Psi \mapsto (\preceq_\Psi,C_\Psi, b_{\Psi})  \) with \( \clRevision \) and consistency of \( \alpha \), we obtain \( \modelsOf{\Psi\clRevision\alpha}=\modelsOf{\Psi} \), a contradiction between the inconsistency of \( \beliefsOf{\Psi\clRevision\alpha} \) and the consistency of \( \beliefsOf{\Psi} \).

			\item[]\hspace{-1.75ex}\eqref{pstl:GCLR6} Suppose \( \alpha\in\beliefsOf{\Psi\clRevision\alpha} \) and \( \alpha\models\beta \), i.e., \( \modelsOf{\Psi\clRevision\alpha}\subseteq \modelsOf{\alpha} \) and \( \modelsOf{\alpha}\subseteq\modelsOf{\beta} \). We show \( \beta\in\beliefsOf{\Psi\clRevision\beta} \).
			If \( \modelsOf{\beta}\cap C_\Psi \) is non-empty, then we obtain \( \beta\in\beliefsOf{\Psi\clRevision\beta} \). If \( \modelsOf{\beta}=\emptyset \), then \( \modelsOf{\alpha}=\emptyset \). We obtain from \( \modelsOf{\Psi\clRevision\alpha}\subseteq \modelsOf{\alpha} \) that \( \beta\in\beliefsOf{\Psi\clRevision\beta} \) holds.
			In the remaining case of \( \modelsOf{\beta}\cap C_\Psi = \emptyset \) and \( \beta \) is consistent, we obtain \( \modelsOf{\alpha}\cap C_\Psi = \emptyset \) from \( \modelsOf{\alpha}\subseteq\modelsOf{\beta} \). 
			From \( \modelsOf{\Psi\clRevision\alpha}\subseteq \modelsOf{\alpha} \) and the credibility-limited-compatibility of \( \Psi \mapsto (\preceq_\Psi,C_\Psi, b_{\Psi})  \) with \( \clRevision \) we obtain two cases:
			\begin{itemize}\setlength{\itemsep}{0pt}
				\item[]\hspace{-1.75ex}\emph{\( \modelsOf{\Psi\clRevision\alpha}=\modelsOf{\Psi} \).} Using the consistency of \( \beta \) and the credibility-limited-compatibility of \( \Psi \mapsto (\preceq_\Psi,C_\Psi, b_{\Psi})  \) with \( \clRevision \) again, we obtain \( {\modelsOf{\Psi\clRevision\beta}}=\modelsOf{\Psi} \). From \( \modelsOf{\Psi}=\modelsOf{\Psi\clRevision\alpha}\subseteq \modelsOf{\alpha} \) and \( \modelsOf{\alpha}\subseteq\modelsOf{\beta} \) we obtain \( {\modelsOf{\Psi\clRevision\alpha}}=\modelsOf{\Psi}=\modelsOf{\Psi\clRevision\beta}\subseteq\modelsOf{\beta} \).
				\item[]\hspace{-1.75ex}\emph{\( \modelsOf{\alpha}=\emptyset \) and \( b_\Psi =\bot \).}
				Because \( \beta \) is consistent and \eqref{pstl:CLAbot} holds, we obtain \( C_\Psi=\Omega \), which is a contradiction to \( \modelsOf{\beta}\cap C_\Psi = \emptyset \).
			\end{itemize}        

			\item[]\hspace{-1.75ex}\eqref{pstl:GCLR7}
			In the following, suppose that \( \alpha,\beta,\gamma \) are formulas with \( \gamma=\alpha\lor\beta \).
			We show that \( \modelsOf{\Psi\clRevision\gamma}=\modelsOf{\Psi\clRevision\alpha} \) or \( \modelsOf{\Psi\clRevision\gamma}=\modelsOf{\Psi\clRevision\beta} \) or \( \modelsOf{\Psi\clRevision\gamma}=\modelsOf{\Psi\clRevision\alpha}\cup\modelsOf{\Psi\clRevision\beta} \) holds.
			Note that by credibility-limited-compatibility of \( \Psi \mapsto (\preceq_\Psi,C_\Psi, b_{\Psi})  \) with \( \clRevision \) there are several cases for each of \( \alpha \) and \( \beta \).
			In the following, we consider (potentially overlapping) cases, all other not explicitly mentioned cases will follow by \eqref{pstl:GCLR5} and symmetry:
			\begin{itemize}\setlength{\itemsep}{0pt}
				\item[]\hspace{-1.75ex}\emph{The case of \( \modelsOf{\alpha}\cap C_\Psi\neq\emptyset \) and \( \modelsOf{\beta}\cap C_\Psi\neq\emptyset \).} For this case observe
				that \( \modelsOf{\gamma}\cap C_\Psi = (\modelsOf{\alpha}\cap C_\Psi)\cup(\modelsOf{\beta}\cap C_\Psi) \) holds.
				We obtain satisfaction of \eqref{pstl:GCLR7} by Lemma \ref{lem:sem_trichotonomy}.

				\item[]\hspace{-1.75ex}\emph{The case of \( \modelsOf{\alpha}\cap C_\Psi\neq\emptyset \) and \( \modelsOf{\beta}\cap C_\Psi=\emptyset \).} 
				In this case, we obtain \( \modelsOf{\gamma}\cap C_\Psi=\modelsOf{\alpha}\cap C_\Psi \) from  \( \modelsOf{\alpha}\cap C_\Psi\neq\emptyset \). Considering credibility-limited-compatibility yields that \( \modelsOf{\Psi\clRevision\gamma}=\modelsOf{\Psi\clRevision\alpha} \) holds.
				\item[]\hspace{-1.75ex}\emph{The case of \( \modelsOf{\alpha}=\modelsOf{\beta} \).} 
				We obtain that \( \modelsOf{\gamma}=\modelsOf{\alpha}=\modelsOf{\beta} \). From \eqref{pstl:GCLR5} we obtain \( \modelsOf{\Psi\clRevision\gamma}=\modelsOf{\Psi\clRevision\alpha}=\modelsOf{\Psi\clRevision\beta} \).
				\item[]\hspace{-1.75ex}\emph{The case of \( \modelsOf{\alpha}\cap C_\Psi=\emptyset \), and \( \modelsOf{\beta}\cap C_\Psi=\emptyset \) and  \( \modelsOf{\alpha}\neq\modelsOf{\beta} \).} 
				We obtain \( \modelsOf{\gamma}=\modelsOf{\alpha}\cup\modelsOf{\beta}\neq\emptyset \) from \( \modelsOf{\alpha}\neq\modelsOf{\beta} \).
				Because \( \modelsOf{\alpha}\cap C_\Psi=\emptyset \) and \( \modelsOf{\beta}\cap C_\Psi=\emptyset \) hold, we have \( \modelsOf{\gamma}\cap C_\Psi=\emptyset \). We obtain \( \modelsOf{\Psi\clRevision\gamma}=\modelsOf{\Psi} \). Moreover, we have that \( \modelsOf{\alpha}\neq\emptyset \) or \( \modelsOf{\beta}\neq\emptyset \) holds. This implies that  \( \modelsOf{\Psi\clRevision\alpha}=\modelsOf{\Psi} \) or \( \modelsOf{\Psi\clRevision\beta}=\modelsOf{\Psi} \) holds.
				We obtain that at last one of \( \modelsOf{\Psi\clRevision\gamma}=\modelsOf{\Psi\clRevision\alpha} \) or  \( \modelsOf{\Psi\clRevision\gamma}=\modelsOf{\Psi\clRevision\beta} \) holds. \qedhere
			\end{itemize}
		\end{itemize}
\end{proof}  
\end{document}